\DeclareTextSymbolDefault{\dh}{T1}
    \newcolumntype{P}[1]{>{\centering\arraybackslash}p{#1}}
    \newcolumntype{M}[1]{>{\centering\arraybackslash}m{#1}}
\newtheorem{definition}{Definition}
\newtheorem{assump}{Assumption}
\newtheorem{theorem}{Theorem}
\let\OldStatex\Statex
\renewcommand{\Statex}[1][3]{%
  \setlength\@tempdima{\algorithmicindent}%
  \OldStatex\hskip\dimexpr#1\@tempdima\relax}
\DeclarePairedDelimiter\abs{\lvert}{\rvert}%
\DeclarePairedDelimiter\norm{\lVert}{\rVert}%
\DeclareMathOperator*{\argmin}{arg\,min}
\let\oldabs\abs
\def\abs{\@ifstar{\oldabs}{\oldabs*}}
\let\oldnorm\norm
\def\norm{\@ifstar{\oldnorm}{\oldnorm*}}
\definecolor{deep-red}{RGB}{192, 0, 0}
\definecolor{deep-purple}{RGB}{120, 0, 170}
\definecolor{good-green}{RGB}{0,175,0}
\definecolor{purple}{RGB}{210, 0, 210}
\definecolor{alizarin}{rgb}{0.82, 0.1, 0.26}
\newcommand{\xmark}{%
\tikz[scale=0.23] {
    \draw[line width=0.7,line cap=round] (0,0) to [bend left=6] (1,1);
    \draw[line width=0.7,line cap=round] (0.2,0.95) to [bend right=3] (0.8,0.05);
}}
\newcommand{\gcsnode}{\mathscr{Q}}
\newcommand{\lbgtogcs}{\mathcal{M}}
\newif\ifTrackChanges   
    \newcommand{\NStrike}[1]{\textcolor{alizarin}{\st{#1}}}
    \newcommand{\HStrike}[1]{\textcolor{purple}{\st{#1}}}
    \newcommand{\cut}[1]{{\color{gray}{#1}}}
    \newcommand{\NStrike}[1]{}
    \newcommand{\HStrike}[1]{}
    \newcommand{\cut}[1]{{}}
\apptocmd{\@maketitle}{} 
\begin{document}

\title{Implicit Graph Search for \\ Planning on Graphs of Convex Sets}



\author{
    \IEEEauthorblockN{Ramkumar Natarajan\IEEEauthorrefmark{1}, Chaoqi Liu\IEEEauthorrefmark{2}, Howie Choset\IEEEauthorrefmark{1}, Maxim Likhachev\IEEEauthorrefmark{1}}
    \IEEEauthorblockA{\IEEEauthorrefmark{1}The Robotics Institute at Carnegie Mellon University
    \\\{rnataraj, choset, maxim\}@cs.cmu.edu}
    \IEEEauthorblockA{\IEEEauthorrefmark{2}Department of Computer Science at the University of Illinois at Urbana-Champaign
    \\chaoqil2@illinois.edu}
    \url{https://sbpl.github.io/ixg/}
}



%

\maketitle

\begin{strip}       
    \centering
    \vspace{-40pt}
    \begin{minipage}{\linewidth}
        \centering
        \includegraphics[width=\textwidth]{img/traj2_filmstrip_2x8.pdf}
    \end{minipage}
    
    \vspace{2.5pt}
    
    \begin{minipage}{\linewidth}
        \centering
        \includegraphics[width=\textwidth]{img/traj4_filmstrip_2x8.pdf}
    \end{minipage}
    \captionof{figure}{Smooth, collision-free motions generated by INSATxGCS (IxG) for various tasks with three Motoman HC10DTP arms operating simultaneously in a 18-DoF multi-arm assembly scenario.}
    \label{fig:moto} 
\end{strip}

\begin{abstract}

Smooth, collision-free motion planning is a fundamental challenge in robotics with a wide range of applications such as automated manufacturing, search \& rescue, underwater exploration, etc. Graphs of Convex Sets (GCS) is a recent method for synthesizing smooth trajectories by decomposing the planning space into convex sets, forming a graph to encode the adjacency relationships within the decomposition, and then simultaneously searching this graph and optimizing parts of the trajectory to obtain the final trajectory. To do this, one must solve a Mixed Integer Convex Program (MICP) and to mitigate computational time, GCS proposes a convex relaxation that is \textcolor{black}{empirically} very tight. Despite this tight relaxation, motion planning with GCS for real-world robotics problems translates to solving the simultaneous batch optimization problem that may contain millions of constraints and therefore can be slow. This is further exacerbated by the fact that the size of the GCS problem is invariant to the planning query. \textcolor{black}{Motivated by the} observation that the trajectory solution lies only on a fraction of the set of convex sets, we present two implicit graph search methods for planning on the graph of convex sets called INSATxGCS (IxG) and IxG*. INterleaved Search And Trajectory optimization (INSAT) is a previously developed algorithm that alternates between searching on a graph and optimizing partial paths to find a smooth trajectory. By using an implicit graph search method INSAT on the graph of convex sets, we achieve faster planning while ensuring stronger guarantees on completeness and optimality. Moveover, introducing a search-based technique to plan on the graph of convex sets enables us to easily leverage well-established techniques such as search parallelization, lazy planning, anytime planning, and replanning as future work. Numerical comparisons against GCS demonstrate the superiority of IxG across several applications, including planning for an 18-degree-of-freedom multi-arm assembly scenario.
\end{abstract}


\IEEEpeerreviewmaketitle

\section{Introduction}

\lettrine[]{T}{}rajectory optimization is widely used in motion planning in high-dimensional state spaces under kinodynamic constraints. However, complex configuration spaces cluttered with obstacles, as few as one, can make the optimization-based motion planning problem nonconvex \cite{fasttrajopt, chomp, trajopt, collopt}. The prevailing wisdom in optimization is to somehow transform the problem into a convex program, which can be solved efficiently for very large problems. Recently, graphs of convex sets \footnote{Per the authors of GCS \cite{sppgcs, gcs}, the graph of convex sets \textcolor{black}{is a directed graph in which each vertex is paired with a convex set whose spatial position is a continuous variable constrained to lie in the convex set and edge cost is a given convex function of the position of the vertices that this edge connects. Whereas,} GCS is the technique to plan on the graph of convex sets. \textcolor{black}{The terms ``GCS'' and ``graph of convex sets''} w\textcolor{black}{ere} used interchangeably in \cite{gcs} but \textcolor{black}{were} clear from the context and we follow the same in this paper.} \cite{gcs} decomposed the planning space into convex sets, and represented this decomposition as a graph (Fig. \ref{fig:gcsintro}). This decomposition has the potential to represent the planning space we encounter in most of the robotics motion planning problems as a sparse graph that covers most of the configuration space. Using this representation, motion planning on the graphs of convex sets introduced in \cite{gcs} called GCS is a remarkable formulation \cite{sppgcs} that transforms an inherently nonconvex problem into a convex optimization. GCS proposed a joint batch optimization over the entire graph (Fig \ref{fig:gcsvsixg}) to simultaneously choose the set of convex sets to traverse through and find the smooth trajectory via them. Such a giant optimization is conceivable as the research in convex optimization is very mature and is applied to efficiently solve large real-world problems. However, the joint batch optimization in GCS means that for any given task, the size of the GCS problem is independent of the planning query. It can potentially solve a massive optimization for every planning query. For example, for the challenging task of finding a trajectory to navigate through the clutter and finding a trajectory between two points that can be trivially connected in a large high-dimensional planning space, GCS attempts to solve the same-sized optimization program. This fundamentally limits the scope of its capability to plan fast for articulated systems, plan with a receding horizon, or do anytime planning. Consequently, we feel that the powerful representation of the graphs of convex sets is underutilized because of the choice of the solving strategy.

\begin{figure}
    \centering
    \includegraphics[width=\columnwidth]{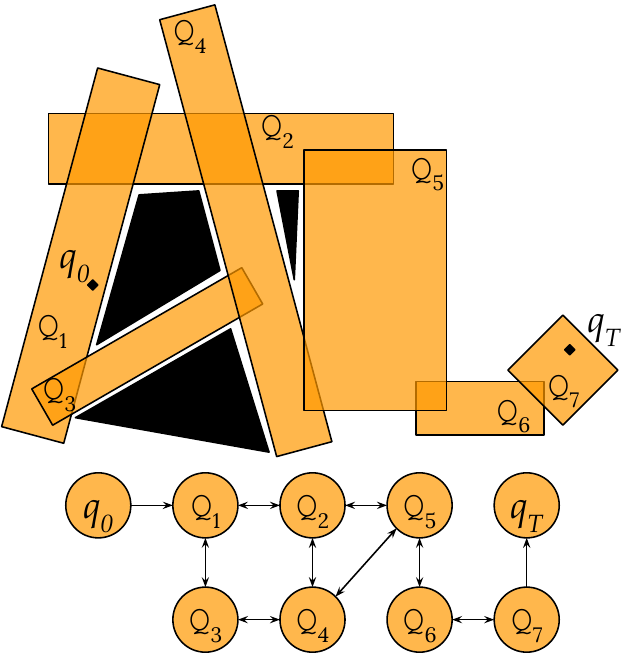}
    \caption{An example of decomposition of free space into convex sets $\gcsnode_1 \cup \ldots \cup \gcsnode_n$ with start $q_0$ and goal $q_T$ states (above) and their relationship represented as a graph (below).}
    \label{fig:gcsintro}
\end{figure}

\begin{figure*}%
\centering
\begin{subfigure}{.5\columnwidth}
\includegraphics[width=\columnwidth]{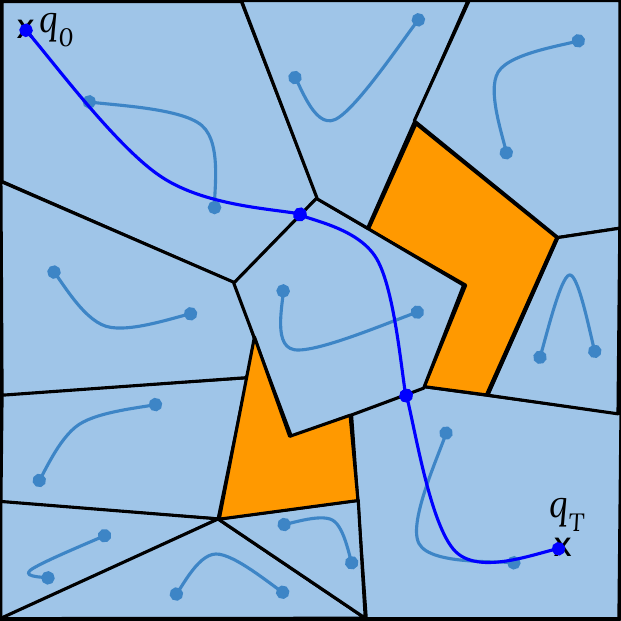}%
\caption{Joint GCS optimization}%
\label{sfa}%
\end{subfigure}
\begin{subfigure}{.5\columnwidth}
\includegraphics[width=\columnwidth]{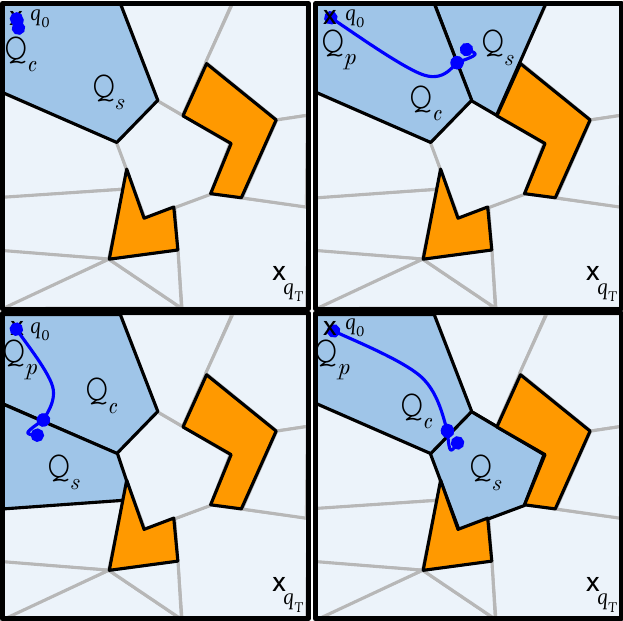}%
\caption{Expansion of start state}%
\label{sfb}%
\end{subfigure}
\begin{subfigure}{.5\columnwidth}
\includegraphics[width=\columnwidth]{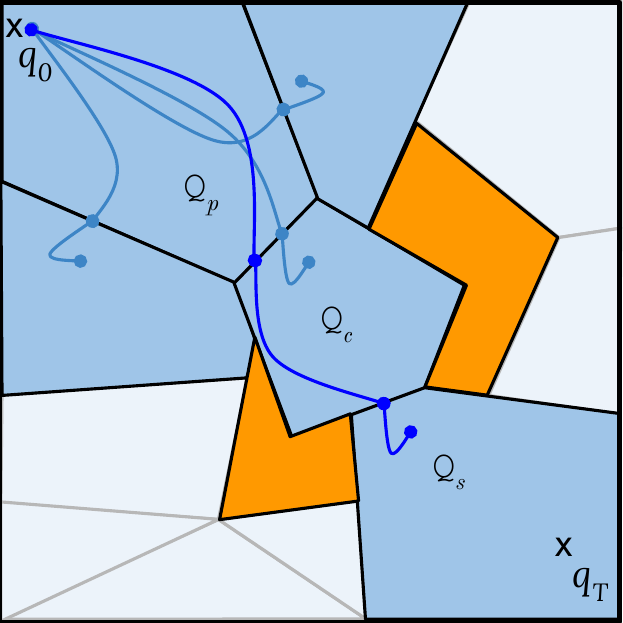}%
\caption{Reusing optimized trajectories}%
\label{sfc}%
\end{subfigure}
\begin{subfigure}{.5\columnwidth}
\includegraphics[width=\columnwidth]{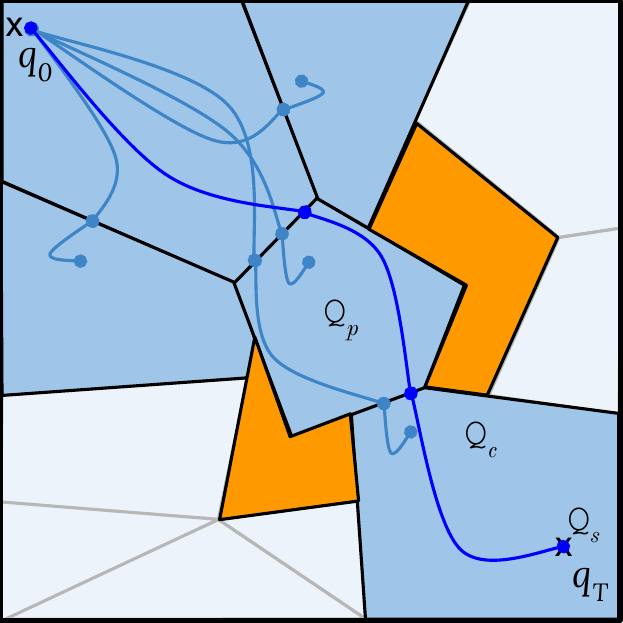}%
\caption{INSATxGCS solves faster}
\label{sfd}%
\end{subfigure}%
\caption{Graphical illustration of INSATxGCS in contrast to GCS. Fig. \ref{sfa} shows the joint batch optimization performed by GCS over the entire graph. The orange regions denote the obstacles and the blue regions denote the free space decomposed into a graph of convex sets. The final trajectory is shown as a dark blue curve and the optimization over each convex set of the graph is shown as light blue curves. Fig \ref{sfb}-\ref{sfd} shows the various steps of INSATxGCS incrementally exploring the convex sets and evaluating the edges only as needed. The convex sets in dark blue denote the explored graph and the light ones denote the unvisited convex sets. The trajectories in dark blue are the incremental optimizations that reuse the light blue solutions from previous expansions for computing trajectories to the successor states. INSATxGCS finds the solution trajectory to the goal without even visiting several convex sets of the graph (Fig. \ref{sfd}).}
\label{fig:gcsvsixg}
\end{figure*}

We see the aforementioned pitfalls as opportunities and this brings us to INSAT \cite{insat1,insat2, pinsat} which is a general framework that \textcolor{black}{combines the} best of graph search and trajectory optimization. It does so by establishing a symbiotic relationship between the discrete graph that chooses the optimizations to run and the trajectory optimization that decides which part of the discrete graph to explore. Given the sparsity of the GCS graph with respect to the dimension of the planning space and the fact that finding a trajectory via a unary tree of convex sets can be posed as a convex program, INSAT is naturally well-positioned to be a solver for planning on the graphs of convex sets. In this paper, we strongly advocate for using the implicit graph search-based technique INSAT to accelerate planning on the graphs of convex sets. We substantiate our claim by theoretically and experimentally showing that INSATxGCS (IxG) and IxG* \textcolor{black}{are} a superior alternative in all our test applications compared to GCS for synthesizing a smooth trajectory. We prove that IxG* has stronger guarantees than GCS on completeness and optimality. \textcolor{black}{IxG*'s stronger completeness enables finding solutions for challenging initial conditions (high velocities etc) and the stronger optimality provides the ability to prespecify the suboptimality bound of the solution as opposed to finding it after the fact in GCS.} We provide ample experimental evidence across different applications to show that IxG finds lower-cost solutions orders of magnitude faster than GCS.

The key idea in this work is based on the observation that depending on the planning query, we need not optimize over the entire graph of convex sets. To materialize this, we interleave searching on the GCS graph and running optimizations only on the fraction of the graph explored by the search. Due to the nature of this optimization in the explored partial graph \footnote{Note that the optimization here can be purely convex as opposed to it being a MICP or a convex relaxation in the batch formulation} and the systematic exploration by the search we are able to provide stronger guarantees on completeness and optimality. 
 We will now begin with a brief discussion of the relevant prior work, followed by our proposed method with its theoretical properties, and present the experimental results.

\section{Related Work}

Collision-free, kinodynamic planning has been a subject of extensive research, driven by the need for robots and autonomous systems to navigate complex environments while considering their dynamic and curvature constraints. This collision avoidance along with dynamic feasibility is typically achieved using one of the four different schemes below 
\begin{itemize}
    \item \textbf{Search-based} kinodynamic planning involves precomputing a set of short, dynamically feasible (smooth) trajectories that capture the robot's capabilities called motion primitives. Then search-based planning algorithms like A* and its variants \cite{maxprimitives} can be used to plan over a set of precomputed motion primitives. These search-based methods provide strong guarantees on optimality and completeness w.r.t the chosen motion primitives. However, the choice and calculation of these motion primitives that prove efficient can be challenging, particularly in high-dimensional systems.
    \item \textbf{Sampling-based} kinodynamic methods adapt and extend classic approaches such as Probabilistic Roadmaps (PRMs) \cite{prm} and Rapidly-exploring Random Trees (RRTs) \cite{rrt} to handle dynamic systems \cite{kinodynamic}. This is achieved using dynamically feasible rollouts with random control inputs, solutions of boundary value problems within the \textit{extend} operation, or using geometric primitives that satisfy curvature. There are probabilistically complete and asymptotically optimal variants \cite{asymprrt, completerrt}, however, empirical convergence might be tricky.
    \item \textbf{Optimization-based} planning methods can generate high-quality trajectories that are dynamically feasible and do not suffer from the curse of dimensionality in search-based methods. They formulate the motion planning problem as an optimization problem \cite{optctr} with cost functions defined for trajectory length, time, or energy consumption. After transcribing into a finite-dimensional optimization, these methods rely on the gradients of the cost function and dynamics and employ numerical optimization algorithms to find locally optimal solutions \cite{trajopt, ddp, komo}. However, except for a small subset of systems (such as linear or flat systems), for most nonlinear systems these methods lack guarantees on completeness, optimality or convergence. In addition to poor convergence and lack of guarantees, dealing with obstacle constraints in these methods are by far the most challenging compared to other options. To the best of our knowledge, GCS \cite{sppgcs, gcs} is the first method to turn the optimization among obstacles into a convex program. 
    \item \textbf{Hybrid} planning methods combine two or all of the above-mentioned schemes. Search and sampling methods are combined in \cite{bit, ss1, ss2}, sampling and optimization methods are combined in \cite{rabit, so1, so2}, search, sampling and optimization methods are combined in \cite{idba}. INSAT combined search and optimization methods and demonstrated its capability in several complex dynamical systems \cite{insat, insatptc, pinsat, insatshield, troptc}. \textcolor{black}{Inspired by INSAT, the approach presented in this paper belongs to this category and combines implicit graph search (Sec. \ref{sssec:impvsexp}) and convex trajectory optimization.}
\end{itemize}

\section{Problem Statement}
The basic problem statement is the same as motion planning on GCS \cite{gcs}. We only present a better way to solve this formulation. So for consistency, we borrow the abstract problem formulation from \cite{gcs}. Consider a set of convex sets $\gcsnode = \{\gcsnode_1, \gcsnode_2, \ldots, \gcsnode_n\}\subset \mathbb{R}^d$ that capture the free and safe planning space for the robot to navigate (Fig. \ref{fig:gcsintro}-top). These convex sets can be computed in \textcolor{black}{the Cartesian} space \cite{iris1} \textcolor{black}{or in the} configuration space for manipulators using \cite{ciris, irisnlp}. The adjacency relationship between these convex sets is represented as a graph $G_{\gcsnode}=(V_{\gcsnode}, E_{\gcsnode})$ (Fig. \ref{fig:gcsintro}-bottom). Given these convex sets $\gcsnode$, the goal of the planning algorithm is to find a trajectory $q: [0, T] \rightarrow \gcsnode$ that obeys the following optimization


\begin{subequations}
\begin{alignat}{2}
\text { minimize } & a L(q)+b T & \label{eq:obj1} \\
\text { subject to } & q(t) \in \gcsnode_1 \cup \ldots \cup \gcsnode_n, & \forall t \in[0, T] \label{eq:obj2} \\
& \dot{q}(t) \in \mathscr{D}, & \forall t \in[0, T] \label{eq:obj3} \\
& q(0)=q_0, q(T)=q_T & \label{eq:obj4} \\
& \dot{q}(0)=\dot{q}_0, \dot{q}(T)=\dot{q}_T \label{eq:obj5} &
\end{alignat}
\label{eq:obj}
\end{subequations}
\noindent
where $T$ and $L(q)$ are the duration and arc length of the trajectory $\int_0^T \norm{\dot{q}(t)}_2dt$, $a,b \geq 0$ are the weights corresponding to the relative importance of these terms (Eq. \ref{eq:obj1}). The constraints require that the trajectory lie entirely within the union of the decomposed convex sets (Eq. \ref{eq:obj2}), \textcolor{black}{respecting} the velocity limits (Eq. \ref{eq:obj3}) \textcolor{black}{given by the convex set $\mathscr{D}$ at all times $t$}, and satisfy the boundary conditions (Eq. \ref{eq:obj4}, \ref{eq:obj5}) provided by the start state $q_0$, goal state $q_T$ and their velocities $\dot{q}_0, \dot{q}_T$.




\section{Background}
\subsection{Preliminaries}
\subsubsection{Prescribed Graph} A graph $G=(V, E)$ is \textit{prescribed} if the values of vertices and edge weights are prespecified. This is the most common version of a graph. We explicitly introduce this terminology to distinguish it from a graph of convex sets where the vertices are convex sets and edge lengths are convex functions of continuous variables representing the position of the vertices \cite{sppgcs}. The combinatorial problem of finding the shortest path on a prescribed graph can be formulated as a simple linear program (LP) \cite{algtb}.

\subsubsection{Implicit vs Explicit Graph Search} Implicit graph search is a technique that interleaves the construction of the graph with the exploration of space. The performance of graph search algorithms is heavily dominated by the use of appropriate data structures. In many robotics applications, the memory requirement to store the entire graph of the state space explicitly is prohibitive and sometimes even impossible. These problems are solved tractably by interleaving the construction of the graph and searching over it. In the case of graphs of convex sets, although the construction is performed offline and the entire graph is explicitly stored, the trajectories via them are optimized during the search, thereby making the search itself implicit.
\label{sssec:impvsexp}

\textcolor{black}{\subsubsection{Lower Bound Graph (LBG)} LBG is a lightweight surrogate graph to the underlying graph of convex sets that can be constructed offline using problem-specific parameters. The purpose of LBG is to provide a provable underestimate on the optimal cost between any two states in the graph of convex sets. Once the start and goal states are provided, we run a cheap backward Dijkstra search from the goal state on LBG to generate an admissible heuristic that accelerates IxG and IxG*. The details for the construction of LBG are given in Sec. \ref{sec:lbg} and its usage within IxG and IxG* is provided in Sec. \ref{sec:ixg} and \ref{sec:ixgs}.}

\textcolor{black}{\subsubsection{Constructing Graph of Convex Sets} Given the environment, the vertices of the graph of convex sets are constructed by growing convex regions in the free planning space. These regions decompose the planning space and the methods used to grow these regions depend on the problem at hand. They can range from simple axis-aligned rectangles for maze-like environments \ref{fig:gcsopt_maze} to decomposing 3D obstacle-free spaces into convex polytopic and ellipsoidal regions \cite{iris1} and complex configuration spaces into convex hyper-polyhedrons and hyper-ellipsoids \cite{ciris, irisnlp}. There is also a recent extension to using the clique cover of visibility graph for constructing minimal convex sets with high coverage \cite{vcc}. In the context of collision-free motion planning, an edge between two regions is considered if they overlap.}

\subsection{Motion Planning on GCS using Convex Optimization}
We provide a brief background on the first work that introduced shortest path planning (SPP) on GCS \cite{gcs, sppgcs} using the flowchart in Fig. \ref{fig:gcs_flow}. The global motion planning problem of navigating around obstacles is nonconvex. This is because the free planning space that constitutes the domain of the motion planning search/optimization becomes nonconvex even with the presence of one obstacle. Consequently, one needs to solve a nonlinear program (NLP) such as Eq. \ref{eq:obj} to solve the motion planning problem. More specifically, since the trajectory $q(t)$ could lie on a finite subset of convex sets $\gcsnode_1, \ldots, \gcsnode_n$, the criteria to identify that subset can be transcribed as an integer constraint over the set in Eq. \ref{eq:obj2}. Precisely stated, this makes the transcription of optimization in Eq. \ref{eq:obj} into a mixed integer nonlinear program (MINLP). The approach in \cite{sppgcs} generalizes the idea of using LP for SPP in prescribed graphs to SPP in GCS using a mixed integer convex program (MICP). 


Planning on GCS is formulated as MICP where the mixed integer part optimizes for the set of overlapping convex sets to traverse through and the convex optimization computes the trajectory via the overlapping convex sets. In other words, from a graph search perspective, the integer program assigns zeros and ones to edges to decide the optimal edges connecting from start to goal. In GCS, both these subproblems are simultaneously solved in one single optimization. Finding a solution for MICP requires an exact branch and bound algorithm and can be inefficient to solve in practice. GCS proposes a convex relaxation on the edge indicator variables of this MICP and demonstrates that this relaxation is very tight in practice. Once the relaxed values on the edges for the edge variables and the parameters of the choice of trajectory representation within each convex set are determined, a simple depth-first search rounding is performed to extract the final trajectory. 

\begin{figure}
    \centering
    \includegraphics[width=\columnwidth]{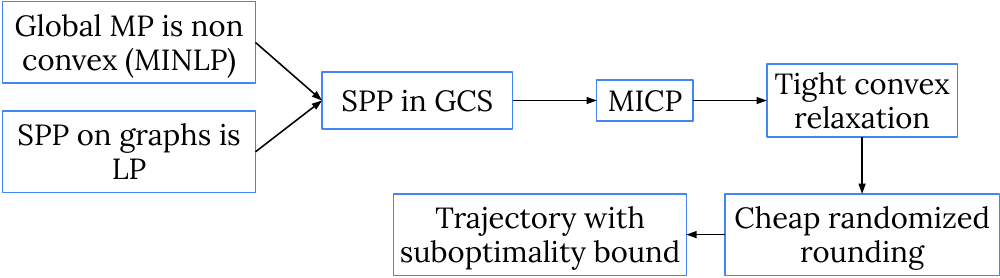}
    \caption{Working principle of GCS \cite{gcs}.}
    \label{fig:gcs_flow}
\end{figure}


\subsection{INSAT: INterleaved Search And Trajectory Optimization}

Grid search is an incredibly powerful tool for planning in state spaces with low dimensions. However, its efficacy diminishes as the dimensionality of the problem grows. In such cases, practitioners often opt for different planning approaches, such as sampling-based planning \cite{prm, rrt, kinodynamic, completerrt} and optimization-based planning \cite{optctr, trajopt, komo, ddp}. These alternatives, while valuable, each come with their own set of trade-offs. Sampling-based planning only provides probabilistic completeness; optimization-based planning demands substantial computational resources and is susceptible to getting trapped in local minima.

\begin{figure}
    \centering
    \includegraphics[width=0.7\columnwidth]{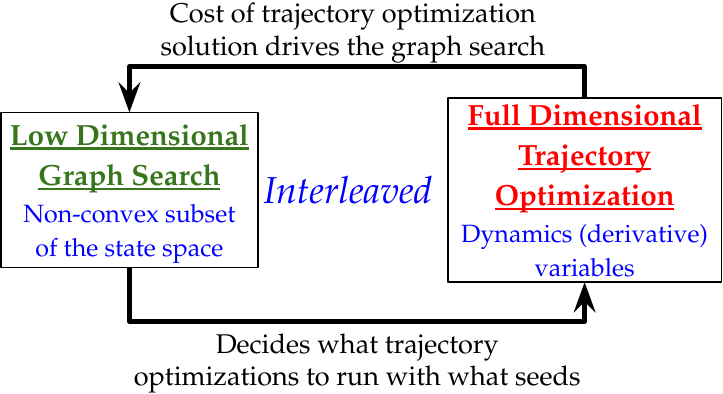}
    \caption{Working principle of INSAT}
    \label{fig:insat_schematic}
\end{figure}

The INSAT framework provides a mechanism for interleaving discrete search and continuous optimization that aims to address these challenges effectively (Fig. \ref{fig:insat_schematic}). The key idea behind INSAT is (a) to identify a low-dimensional manifold, (b) perform a search over a grid-based or sampling-based graph that represents this manifold, (c) while searching the graph, utilize high-dimensional trajectory optimization to compute the cost of partial solutions found by the search. As a result, the search over the lower-dimensional graph decides what trajectory optimizations to run and with what seeds, while the cost of solution from the trajectory optimization drives the search in the lower-dimensional graph until a feasible high-dimensional trajectory from start to goal is found. The dynamic programming in the low-dimensional search enables warm-starting trajectory optimizations using highly informative initial guesses.

\section{Motion Planning on GCS using INSAT}
Before we dive into our proposed methods, we will first describe in detail why the GCS representation is underutilized because of the choice of the solution strategy. Following that, we will present our approach of using INSAT as a solver for trajectory optimization on graphs of convex sets leading to two algorithms, IxG and IxG*. 
\subsection{GCS Representation is Underutilized}
\begin{figure}
    \centering
    \includegraphics[width=0.6\columnwidth]{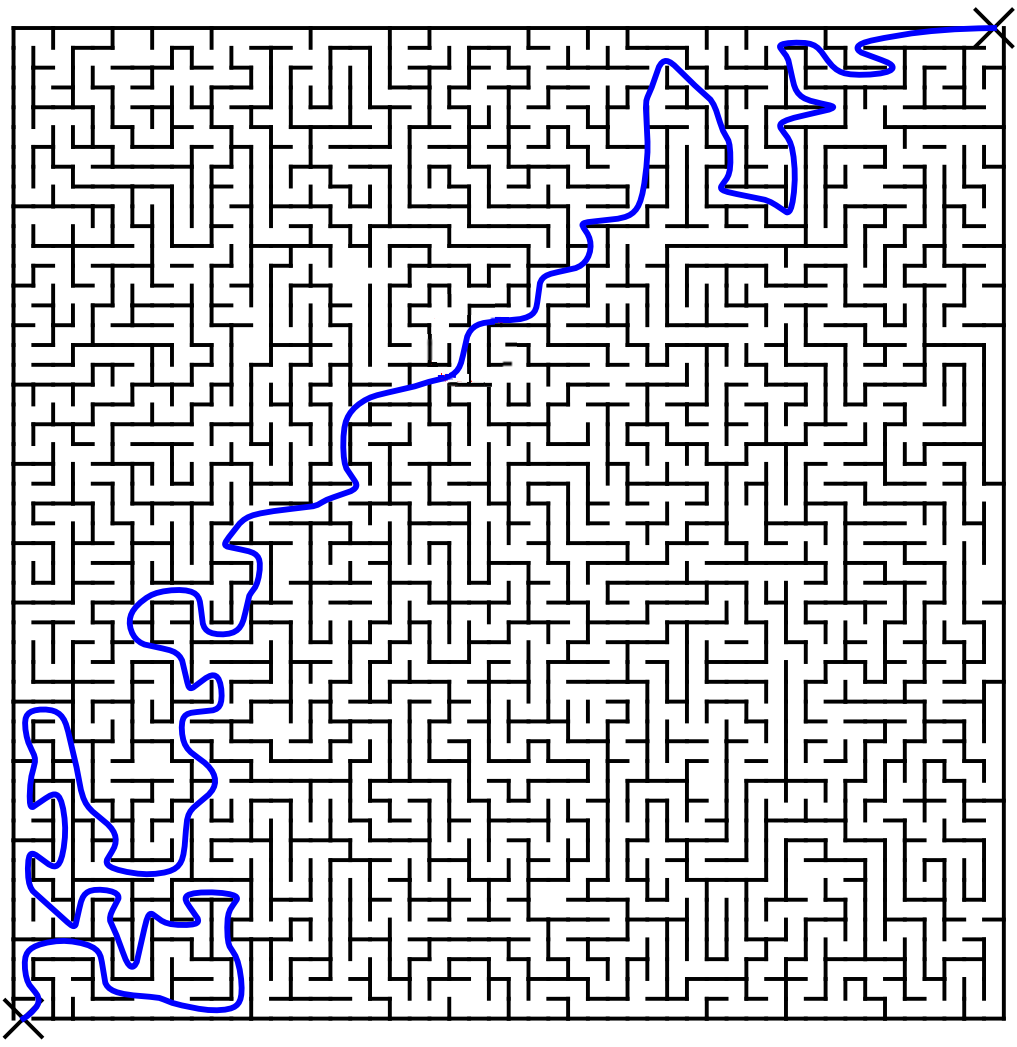}
    \caption{GCS optimization for planning in this maze (figure borrowed from \cite{gcs}) has 1,256,157 constraints regardless of the positions of $q_0$ and $q_T$.}
    \label{fig:gcsopt_maze}
\end{figure}
The compact joint optimization is elegant both in terms of the formulation and the result it generates. However, it falls short on four critical fronts namely, 
\begin{enumerate}
    \item As already mentioned, the optimization solves a huge problem even for a trivial planning query which is typically undesirable (Fig. \ref{fig:gcsopt_maze}).
    \item When planning for dynamical systems with \textcolor{black}{challenging initial conditions (with high velocities etc.)}, to guarantee completeness, the system might have to revisit a state and have a self-intersection in the configuration space (see Sec. \ref{sec:comp}). In the case of GCS, the states are convex sets which could be large subspaces in some instances. The current formulation of GCS does not permit trajectories that require revisiting the same convex set to reach the goal. A trivial modification of maintaining multiple copies of each convex set will make the already big GCS optimization more expensive.
    \item The suboptimality bound provided by the GCS method is only computed as an after-the-fact statistic. In other words, GCS does not provide a way to enforce a degree of suboptimality as part of the optimization. This option can be very useful in trading off the solution quality with the planning time in highly complex scenarios. 
    \item Though the convex decomposition of the planning space is represented using a graph data structure, because of the choice of solution strategy, the GCS method cannot leverage any of the graph search techniques such as anytime or incremental planning, planning with homotopy constraints, search parallelization using distributed hardware etc. 
\end{enumerate}

The method presented in this paper precisely addresses these limitations of GCS while having dramatically higher runtime performance. 

\begin{figure*}[htp!]
\centering
\begin{subfigure}{.48\textwidth}
\includegraphics[width=\columnwidth]{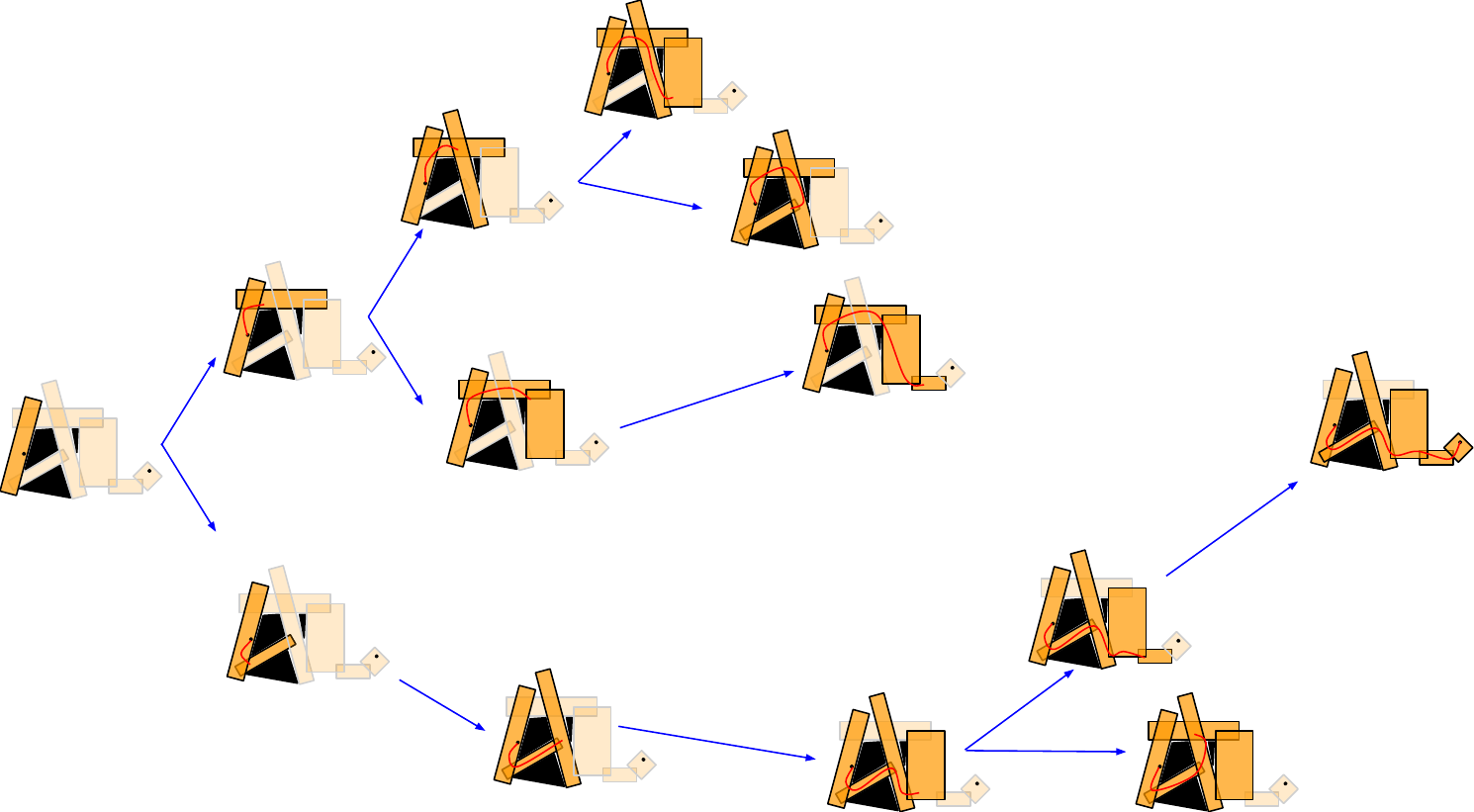}%
\caption{IxG search progression.}
\label{fig:ixg_graph}%
\end{subfigure}\quad
\begin{subfigure}{.48\textwidth}
\includegraphics[width=\columnwidth]{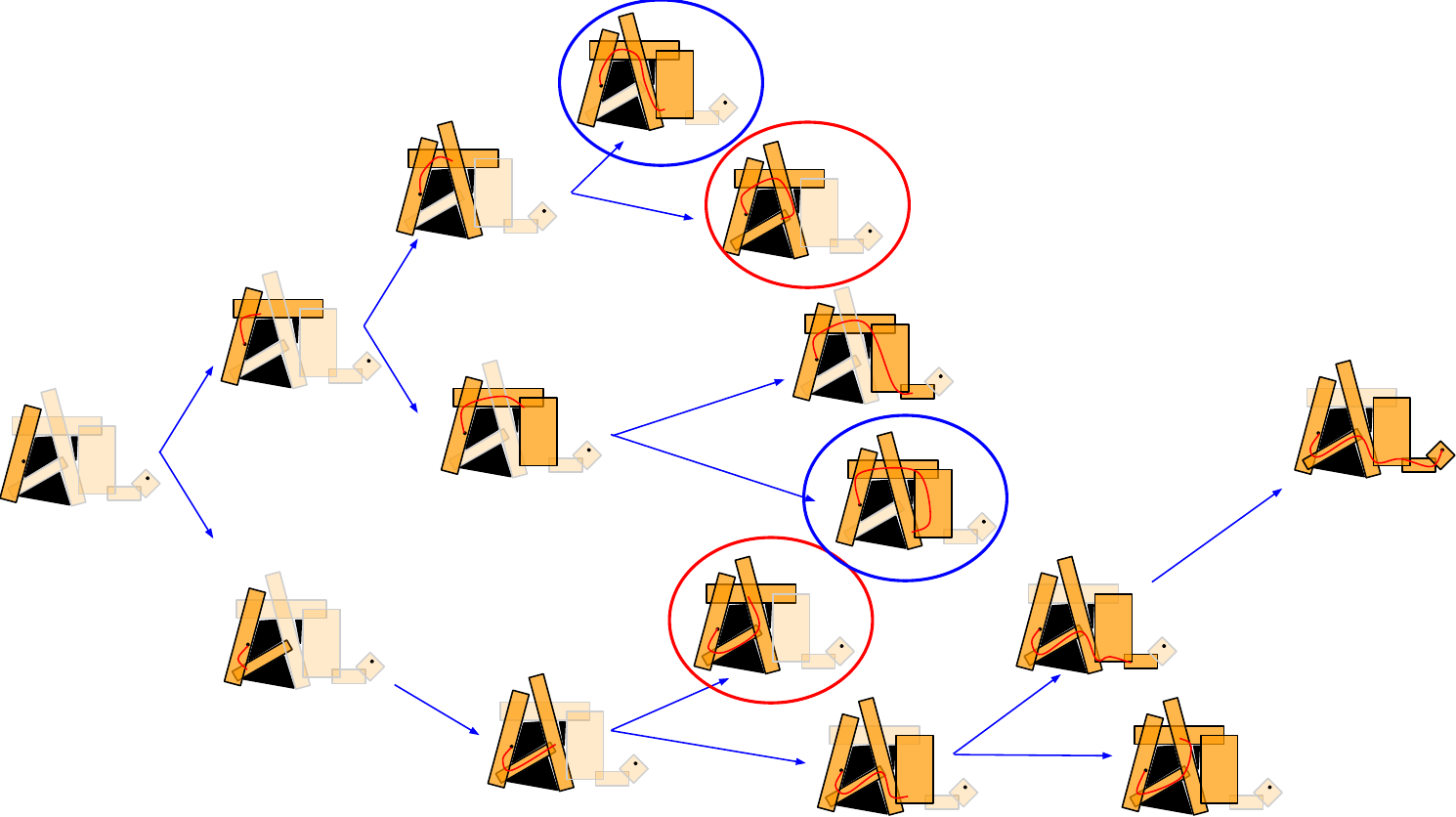}%
\caption{IxG* search progression.}
\label{fig:ixgs_graph}%
\end{subfigure}
\caption{Comparing IxG and IxG* on the same running example introduced in Fig. \ref{fig:gcsintro}. In IxG graph, the frontier consists of a successor region at most once. But in IxG* graph the same successor region via a different path is present multiple times shown using the same colored circles. This is because IxG* allows re-opening, re-expanding and revisiting states.}
\label{fig:ixgvsixgs}
\end{figure*}

\subsection{INSATxGCS (IxG)}
\label{sec:ixg}
We now explain IxG in detail using its pseudocode in Alg. \ref{alg:ixg} and visual illustrations in Fig. \ref{fig:gcsvsixg}, \ref{fig:ixgflow} and \ref{fig:ixgvsixgs}. As the relationship between the convex sets is represented using a graph data structure, a graph search algorithm is naturally suited to search over it for trajectory synthesis. Before the search begins, we construct the lower bound graph (LBG) $G_{lb}$ as explained below in Sec. \ref{sec:lbg}. The algorithm takes as input the start and the goal states $q_0$ and $q_T$, the graph of convex sets $G_\gcsnode$ and the LBG $G_{lb}$. It begins by assigning a convex set to $q_0$ and $q_T$ (Alg. \ref{alg:ixg}: line \ref{line:ixg_init}) and updating the LBG to reflect the current planner query with $q_0$ and $q_T$ (Alg. \ref{alg:ixg}: line \ref{fig:ixg_lbg1}-\ref{fig:ixg_lbg2}, explained in detail in Sec. \ref{sec:lbg}). Once the LBG is updated, we run an extremely fast backward Dijkstra search starting from the goal state $q_T$ on the LBG to obtain a provable lower bound from every node to $\gcsnode_T$ in $G_\gcsnode$. This lower bound by definition is an under-estimate on the optimal cost-to-go form any $\gcsnode_i$ and can be used as a heuristic to expedite the search \cite{astar}. As any informed graph search algorithm, IxG maintains a cost-to-come $g(\gcsnode_i)$ over every convex set in $G_\gcsnode$ and initializes $g(Q_0)=0$. Similar to a best first search like wA* \cite{pohlwastar}, IxG maintains a priority queue called OPEN over a list of convex sets to be expanded. 

\setlength{\textfloatsep}{4pt}
\begin{algorithm}
\begin{algorithmic}[1]

\Procedure{Key}{$\gcsnode$} \label{line:ixg_key} \Comment{\textcolor{black}{Computes priority value}}
\State \textbf{return} $g(\gcsnode) + \epsilon*l(\gcsnode)$ \Comment{Alg. \ref{alg:lbg}}
\EndProcedure

\Procedure{UpdateLBG}{$G_\gcsnode, G_{lb}, q_i$} \label{line:updatelbg} 
\State $\gcsnode^\prime = \{\gcsnode_i \in V_\gcsnode \mid q_i \in \gcsnode_i \}$ \Comment{$G_\gcsnode = (V_\gcsnode, E_\gcsnode)$} 
\State \textbf{for} $\gcsnode_i \in \gcsnode^\prime$ \textbf{do}
\State $\>$ \textbf{for} $v_{lb} \in G_{lb}$ \textbf{do}
\State $\>$ $\>$ \textbf{if} $v_{lb} \in \gcsnode_i$ \textbf{then}
\State $\>$ $\>$ $\>$ $G_{lb}.\text{AddVertex}(q_i)$ \Comment{See Fig. \ref{fig:lbg}, Sec. \ref{sec:lbg}}
\State $\>$ $\>$ $\>$ $G_{lb}.\text{AddEdge}((q_i, v_{lb}))$ \Comment{See Fig. \ref{fig:lbg}, Sec. \ref{sec:lbg}}
\State $\>$ $\>$ $\>$ $G_{lb}.\text{AddEdge}((v_{lb}, q_i))$ \Comment{See Fig. \ref{fig:lbg}, Sec. \ref{sec:lbg}}
\State \textbf{return} $G_{lb}$
\EndProcedure

\Procedure{Main}{$q_0, q_T, G_\gcsnode, {G}_{lb}$}
\State $\gcsnode_0 = q_0; \gcsnode_T = q_T$ \Comment{Convex sets for $q_0$ and $q_T$} \label{line:ixg_init}
\State $G_{lb}$ = \textproc{UpdateLBG}($G_\gcsnode, {G}_{lb}, q_0$) \Comment{See Fig. \ref{fig:lbg}} \label{fig:ixg_lbg1}
\State $G_{lb}$ = \textproc{UpdateLBG}($G_\gcsnode, {G}_{lb}, q_T$) \Comment{See Fig. \ref{fig:lbg}} \label{fig:ixg_lbg2}
\State $l(\gcsnode)$ = \textproc{Dijkstra}($G_{lb}, q_T$) \Comment{LB search given $q_T$} \label{line:ixg_dij}
\State $\forall \gcsnode_i \in V_\gcsnode, g(\gcsnode_i) = \infty$; $g(\gcsnode_0) = 0$ \label{line:ixg_ginit}\Comment{$G_\gcsnode = (V_\gcsnode, E_\gcsnode)$}
\State Insert $\gcsnode_0$ in OPEN with \textproc{Key}($\gcsnode_0$) \label{line:init}
\State \textbf{while} \textproc{Key}($\gcsnode_T$) $\le$ $\infty$ \textbf{do} \label{line:ixg_term}
\State $\>$ $\gcsnode_c =$ OPEN.pop() \label{line:pq} 
\State $\>$ $\gcsnode_p = $ \text{Predecessor}($\gcsnode_c$) \label{line:ixg_pred}
\State $\>$ \textbf{for} $\gcsnode_s \in$ \text{Successors}($\gcsnode_c$) \textbf{do}  \label{line:ixg_succ}
\State $\>$ $\>$ \textbf{if} $\gcsnode_s \notin $ CLOSED \textbf{then} \label{line:closed}
\State $\>$ $\>$ $\>$ $\gcsnode^{0\ldots c} = $ Ancestors($\gcsnode_s$)
\State $\>$ $\>$ $\>$ $\gcsnode^{0\ldots s} = (\gcsnode^{0\ldots c}, \gcsnode_s)$
\State $\>$ $\>$ $\>$ $q_{pcs}(t)$ = \textproc{LBGLookup}($\gcsnode_p, \gcsnode_c, \gcsnode_s$) \Comment{Sec. \ref{sec:lbg}} \label{line:ixg_lbglkp}
\State $\>$ $\>$ $\>$ $q_{0c}(t) = \gcsnode_c$.trajectory() \Comment{From recursion} 
\State $\>$ $\>$ $\>$ $q_{0s}(t)$ = \textproc{Optimize}($\gcsnode^{0\ldots s}, q_{0c}(t),q_{pcs}(t)$) \Comment{Eq. \ref{eq:pobj}} \label{line:ixg_opt}
\State $\>$ $\>$ $\>$ \textbf{if} $q_{0s}(t)$.isValid() \textbf{then} \label{line:ixg_valid}
\State $\>$ $\>$ $\>$ $\>$ \textbf{if} $c(q_{0s}(t)) < g(q_{0s}(t))$ \textbf{then} \Comment{Eq. \ref{eq:pobj1}}
\State $\>$ $\>$ $\>$ $\>$ $\>$ $\gcsnode_s$.trajectory = $q_{0s}(t)$
\State $\>$ $\>$ $\>$ $\>$ $\>$ Insert/Update $\gcsnode_s$ in OPEN with \textproc{Key}($\gcsnode_s$)
\State \textbf{return} $\gcsnode_T$.trajectory()
\EndProcedure
\end{algorithmic}
\caption{INSATxGCS (IxG)}
\label{alg:ixg}
\end{algorithm}

Inside the search loop that runs until the goal state is expanded (Alg. \ref{alg:ixg}: line \ref{line:ixg_term}), IxG picks the lowest cost node $\gcsnode_c$ for expansion \textcolor{black}{(Alg. \ref{alg:ixg}, line \ref{line:pq})} per the priority value (Alg. \ref{alg:ixg}: line \ref{line:ixg_key}). \textcolor{black}{The priority value is a sum of cost-to-come $g(\gcsnode)$ and weighted cost-to-go $\epsilon l(\gcsnode)$.} The term $\epsilon$ is an inflation on the admissible heuristic (underestimate) that trades off planning speed to solution quality \cite{pohlwastar}. It is important to note that the search also keeps track of the expanded nodes using CLOSED list (Alg. \ref{alg:ixg}, line \ref{line:closed}) to prevent re-expansion. As IxG is derived by applying INSAT to plan on GCS, for every successor $\gcsnode_s$ generated from expanding $\gcsnode_c$, we consider the set of ancestors $\gcsnode^{0\ldots s}$ and optimize a trajectory through them via a two-step process (Alg. \ref{alg:ixg}, lines \ref{line:ixg_succ}-\ref{line:ixg_opt}). \textcolor{black}{Let $\gcsnode_p$ be the predecessor of $\gcsnode_c$ (see Fig. \ref{fig:gcsvsixg}).} First, an incremental trajectory $q_{pcs}(t)$ that connects any point in $\gcsnode_p$ to any point $\gcsnode_s$ via $\gcsnode_c$ is computed. We will see in Sec. \ref{sec:lbg} how this step can fast-forwarded by a simple lookup. Then, the full trajectory from the start state $q_{0s}(t)$ is obtained by warm-starting an optimization over $\gcsnode^{0\ldots s}$ with $q_{0c}(t)$ and $q_{pcs}(t)$ (Sec. \ref{sec:seqopt}, Eq. \ref{eq:pobj}). Note that the dynamic programming nature of the algorithm will guarantee the existence of $q_{0c}(t)$, as otherwise $\gcsnode_c$ would not have been added to OPEN and therefore never expanded.  Finally, the optimized trajectory $q_{0s}(t)$ is checked for validity and added/updated in the OPEN list to be expanded in the future. When the goal convex region $\gcsnode_T$ is generated as a successor, if a valid trajectory to it $q_{0T}(t)$ is computed, then the condition in line $\ref{line:ixg_term}$ fails. The search then exits the loop and returns $q_{0T}(t)$.


\begin{figure}
    \centering
    \includegraphics[width=\columnwidth]{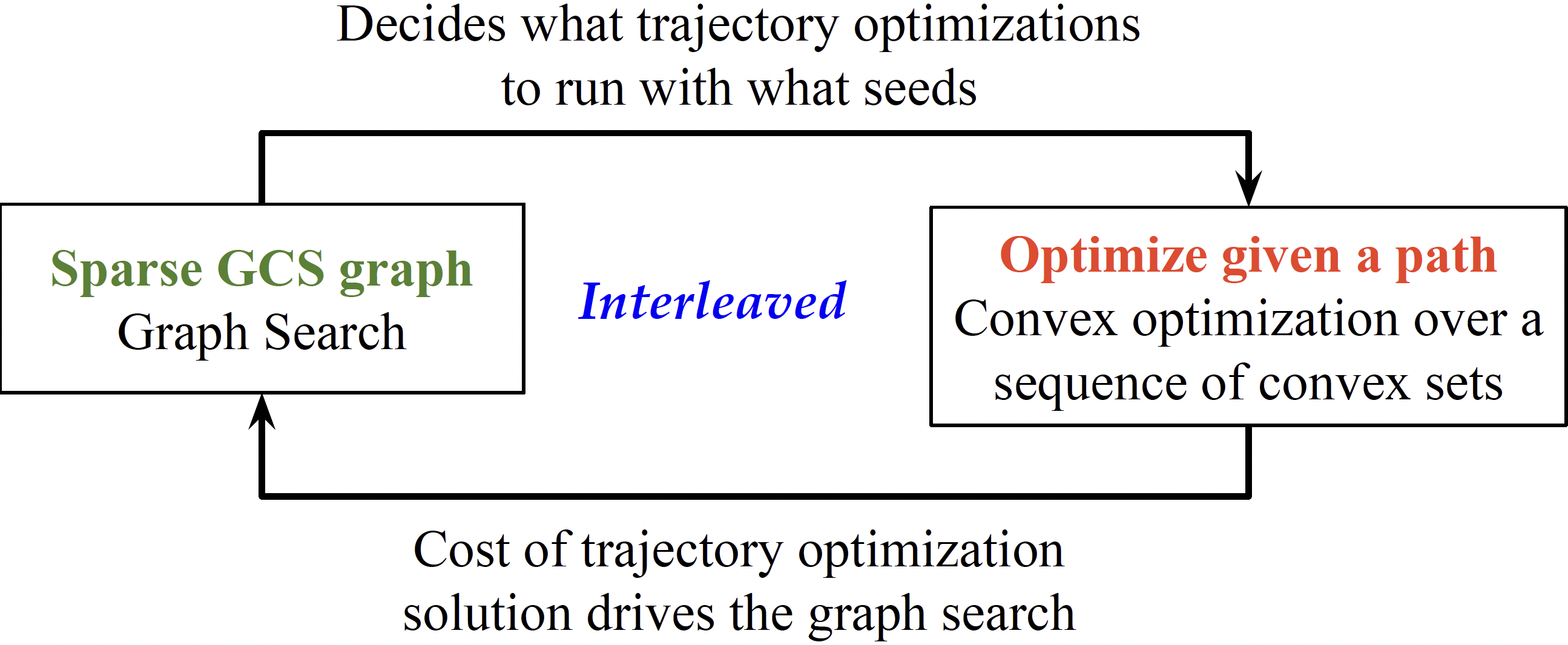}
    \caption{Working principle of IxG, IxG*}
    \label{fig:ixgflow}
\end{figure}

\subsubsection{Optimization over a Sequence of Convex Sets}
\label{sec:seqopt}
In line \ref{line:ixg_opt} of Alg. \ref{alg:ixg}, we solve an optimization given a sequence of convex sets, namely, the one obtained from the set of ancestors. This step is a crucial difference between the GCS trajectory optimization and IxG and perhaps the single most important reason for the efficiency of IxG. IxG decouples the joint GCS optimization of choosing the safe set of convex regions to traverse and designing robot trajectories within each region into searching over the safe set of convex regions, only optimizing trajectories over these partial paths from this search and guiding the search using the output of the optimized trajectory. As a result, the MICP in GCS trajectory optimization is split into a graph search that explores the convex regions systematically and a convex optimization over a prespecified sequence of convex sets.

Consider a prespecified path given by a sequence of convex regions $\gcsnode^{1\ldots K} = (\gcsnode_1, \gcsnode_2, \ldots, \gcsnode_K)$ in the graph $G_\gcsnode$ such that $(\gcsnode_k, \gcsnode_{k+1}) \in E_\gcsnode$ where $1<k \leq K \leq n$. The piecewise trajectory through the path $\gcsnode^{1\ldots K}$ made of a trajectory \textcolor{black}{segment} through each region $q_k(t)$ can be found by solving the below convex optimization. This is the operation carried out in line \ref{line:ixg_opt} of Alg. \ref{alg:ixg} and line \ref{line:ixgs_opt} of Alg. \ref{alg:ixgs}.

\begin{subequations}
\begin{alignat}{2}
\text { min } & a L(q)+b \sum_{k=1}^K T_k & \label{eq:pobj1} \\
\text { s.t. } & q_k(t_k) \in \gcsnode_k & \forall t_k \in[0, T_k], k \in {1,\ldots, K} \label{eq:pobj2} \\
& \dot{q_k}(t_k) \in \mathscr{D}  & \forall t_k \in[0, T_k], k \in {1,\ldots, K} \label{eq:pobj3} \\
& q^{(j)}_{k-1} (T_k) = q^{(j)}_{k}(0) \ \ & \forall t_k \in[0, T_k], k \in {2,\ldots, K} \label{eq:pobj4}
\end{alignat}
\label{eq:pobj}
\end{subequations}
\ignorespacesafterend
\textcolor{black}{where Eq. \ref{eq:pobj2} requires every trajectory piece $q_k(t_k)$ to lie in their corresponding convex set $\gcsnode_k$, Eq. \ref{eq:pobj3} is similar to Eq. \ref{eq:obj3} and Eq. \ref{eq:pobj4} captures overall continuity and smoothness using a boundary condition for each segment of the trajectory and its derivative such that the endpoint of a segment matches the starting point of the subsequent segment. In Alg. \ref{alg:ixg} and Alg. \ref{alg:ixgs} the start and goal states within the start and goal convex sets are represented as convex sets too (the start/goal state is a point in the ambient space, and is trivially a convex set). So the start state has an outgoing edge to the convex set it belongs to and the goal state has an incoming edge from the convex set it belongs to. Hence under this representation, the explicit boundary conditions similar to Eq. \ref{eq:obj4} and Eq. \ref{eq:obj5} can be removed in Eq. \ref{eq:pobj}.} 
\color{black}


\subsection{IxG*}
\label{sec:ixgs}

\setlength{\textfloatsep}{4pt}
\begin{algorithm}
\begin{algorithmic}[1]

\Procedure{Key}{$\gcsnode^{a\ldots b}$} \label{line:key}
\State \textbf{return} $g(\gcsnode^{a\ldots b}) + \epsilon * l(\gcsnode_b)$ \Comment{Get $\gcsnode_b$ from $\gcsnode^{a\ldots b}$}
\EndProcedure

\Procedure{UpdateLBG}{$\gcsnode, G_{lb}, q_i$} \label{line:updatelbg}
\State Same as Alg. \ref{alg:ixg}
\EndProcedure

\Procedure{Main}{$q_0, q_T, \gcsnode, {G}_{lb}$}
\State Same as lines \ref{line:ixg_init}-\ref{line:ixg_dij} of Alg. \ref{alg:ixg}
\State $q_{0T}(t) =$\textproc{IxG}($q_0, q_T, \gcsnode, {G}_{lb}$) \Comment{Use Alg. \ref{alg:ixg} for UB} \label{line:ixgs_ub1}
\State $u = \epsilon * c(q_{0T}(t))$ \Comment{UB on bounded suboptimal cost}  \label{line:ixgs_ub2}
\State $\forall \gcsnode^{0\ldots T}, g(\gcsnode^{0\ldots T}) = \infty$; $g(\gcsnode^0) = 0$ \label{line:ixgs_ginit}
\State Insert $\gcsnode^0$ in OPEN with \textproc{Key}($\gcsnode^0$) \label{line:init}
\State \textbf{while} \textproc{Key}($\gcsnode^{0\ldots T}$) $\le$ OPEN.min() \textbf{do} \label{line:ixgs_term}
\State $\>$ $\gcsnode^{0 \ldots c} =$ OPEN.pop() \label{line:ixgs_pq} \label{line:ixgs_pop}
\State $\>$ $\gcsnode_p = $ \text{Predecessor}($\gcsnode_c$)  \Comment{Get $\gcsnode_c$ from $\gcsnode^{0\ldots c}$}
\State $\>$ \textbf{for} $\gcsnode_s \in$ \text{Successors}($\gcsnode_c$) \textbf{do}  \label{line:ixgs_succ} \Comment{Get $\gcsnode_c$ from $\gcsnode^{0\ldots c}$}
\State $\>$ $\>$ \textbf{if} Allow Cycles \textbf{or} $\gcsnode_s \notin \gcsnode^{0\ldots c}$ \textbf{then} \label{line:ixgs_cycle}
\State $\>$ $\>$ $\>$ $\gcsnode^{0\ldots s} = (\gcsnode^{0\ldots c}, \gcsnode_s)$
\State $\>$ $\>$ \textbf{else if} $\gcsnode_s \in \gcsnode^{0\ldots c}$ \textbf{then}
\State $\>$ $\>$ $\>$ \textbf{continue}
\State $\>$ $\>$ $q_{pcs}(t)$ = \textproc{LBGLookup}($\gcsnode_p, \gcsnode_c, \gcsnode_s$) \Comment{Sec. \ref{sec:lbg}}\label{line:ixgs_lbglkp}
\State $\>$ $\>$ $q_{0c}(t) = \gcsnode^{0 \ldots c}$.trajectory() \Comment{From recursion} 
\State $\>$ $\>$ $q_{0s}(t)$ = \textproc{Optimize}($\gcsnode^{0\ldots s}, q_{0c}(t),q_{pcs}(t)$) \Comment{Eq. \ref{eq:pobj}} \label{line:ixgs_opt}
\State $\>$ $\>$ \textbf{if} $q_{0s}(t)$.isValid() \textbf{then} \label{line:ixgs_valid}
\State $\>$ $\>$ $\>$ \textbf{if} $c(q_{0s}(t)) + \epsilon*l(\gcsnode_s) > \epsilon*u$ \textbf{then} \label{line:ixgs_prune}
\State $\>$ $\>$ $\>$ $\>$ \textbf{continue} \Comment{Prune the path $\gcsnode^{0\ldots s}$}
\State $\>$ $\>$ $\>$ \textbf{else}
\State $\>$ $\>$ $\>$ $\>$ $\gcsnode^{0\ldots s}$.trajectory = $q_{0s}(t)$
\State $\>$ $\>$ $\>$ $\>$ Insert/Update $\gcsnode^{0\ldots s}$ in OPEN with \textproc{Key}($\gcsnode^{0\ldots s}$)
\State \textbf{return} $\gcsnode^{0\ldots T}$.trajectory()
\EndProcedure
\end{algorithmic}
\caption{IxG*}
\label{alg:ixgs}
\end{algorithm}

Before we describe the details of the optimal version of IxG called IxG*, we will briefly explain why IxG is not an optimal algorithm. An important caveat in considering the set of ancestors to optimize for a trajectory edge to every successor during expansion is that it breaks the Markov property of heuristic search. The Markov property in graph search requires that the cost of the successor depends only on the current state and not on the history of states leading up to it. Heuristic search methods leverage this property to introduce CLOSED list and guarantee optimality under admissible heuristics without re-opening and re-expanding states. As IxG violates the Markov property, it is neither optimal nor bounded suboptimal. Consequently, we need to allow the re-expansion of the convex regions in the search and essentially perform a tree search. Just by allowing the re-expansion of the states (with and without allowing cycles) and thereby searching over all possible paths for every single state, IxG can be made provably optimal and bounded suboptimal. However, this will blow up the number of expansions and make the algorithm intractable. To alleviate this, we introduce a pruning mechanism (Alg. \ref{alg:ixgs}, line \ref{line:ixgs_prune}) using a lower bound on the cost-to-go from every state (we use LBG computed in Alg. \ref{alg:lbg}) and an upper bound on the optimal cost (Alg. \ref{alg:ixgs}, line \ref{line:ixgs_ub1}-\ref{line:ixgs_ub2}). 

IxG* algorithm presented using pseudocode in Alg. \ref{alg:ixgs} inherits most of its procedure from IxG (Alg. \ref{alg:ixg}). The key differences are (i) the search is carried over the paths instead of convex states (see line \ref{line:ixgs_pop} in Alg. \ref{alg:ixgs}) (ii) the CLOSED list is omitted in favor of allowing re-opening, re-expanding and revisiting (allow cycles in line \ref{line:ixgs_cycle}) convex regions and (iii) the pruning criteria employed in line \ref{line:ixgs_prune}. The pruning step checks if the sum of the cost-to-come via a particular path and the cost-to-go obtained from LBG is greater than the maximum allowed cost computed as upper bound at the beginning of the search. If this condition is satisfied, then such a path will never lead to an optimal or $\epsilon$-suboptimal solution. As we are searching different paths of convex sets leading to the same convex region, the OPEN list contains a list of paths.

\subsection{Lower Bound Graph (LBG)}
\label{sec:lbg}
\begin{figure}
    \centering
    \includegraphics[width=\columnwidth]{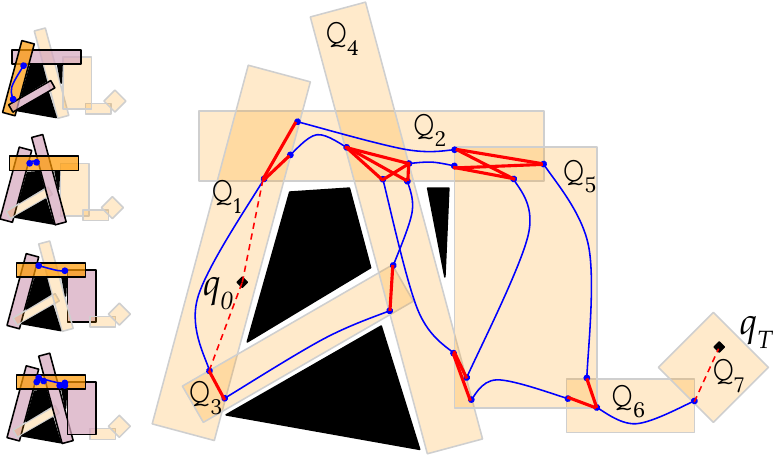}
    \caption{The lower bound graph (LBG) whose vertices are shown as blue dots and edges are shown using blue curves and red lines. The blue curves are found by solving a convex program over every convex set triplet connected by edges (shown in the left column with the triplet highlighted as purple-orange-purple). The red edges are optimized with simpler constraints or can be set to zero cost to satisfy the lower bound trivially. See Alg. \ref{alg:lbg} for construction of LBG.}
    \label{fig:lbg}
\end{figure}

The lower bound graph (LBG) is the core accelerator behind the efficient performance of IxG*. Given a choice of motion generation parameters such as the order, minimum derivative of continuity, etc in the case of B-splines trajectories, the LBG is a surrogate \textit{prescribed graph} computed over $G_\gcsnode$ using Alg. \ref{alg:lbg}. The edges of the LBG are formed by optimizing over a 3-sequence (triplet) of convex sets in $G_\gcsnode$ (Alg. \ref{alg:lbg}, line \ref{line:lbg_opt1}-\ref{line:lbg_opt2}). Each triplet constitutes a node in $G_\gcsnode$ along with its incoming and outgoing convex regions (Alg. \ref{alg:lbg}, lines \ref{line:lbg_oloop}-\ref{line:lbg_tlet}). These edges are then added to the LBG graph $G_{lb}$. The boundary points of the optimized edge trajectories form $V_{lb}$. Following the edges generated from triplet optimization (see blue curves in Fig. \ref{fig:lbg}), the graph is made into a single connected component by joining the terminals of optimized trajectories in the overlapping subset of convex regions with zero cost edges (Alg. \ref{alg:lbg}, lines \ref{line:lbg_olap}-\ref{line:lbg_zedge}) or edges with lower order and continuity (to certify provable lower bound). This is shown as red lines in Fig. \ref{fig:lbg}. Finally, the Alg. \ref{alg:lbg} returns the constructed $G_{lb}$ with a single component. 

The LBG graph is used for two purposes in IxG/IxG*. Using an optimal search such as Dijkstra over $G_{lb}$ gives a provable lower bound on the optimal cost to traverse between any two convex regions in $G_\gcsnode$. Given a planning query, $q_0$ and $q_T$, the LBG is used for computing the admissible heuristic from any state to $q_T$ and verifying the pruning criteria in IxG*. Moreover, since the LBG is formed by optimizing over the set of possible triplets in $G_\gcsnode$, they can be used directly as the solutions of incremental optimization steps in IxG and IxG* (Alg. \ref{alg:ixg}, line \ref{line:ixg_lbglkp} and Alg. \ref{alg:ixgs}, line \ref{line:ixgs_lbglkp})

\begin{figure}
    \centering
    \includegraphics[width=0.7\columnwidth]{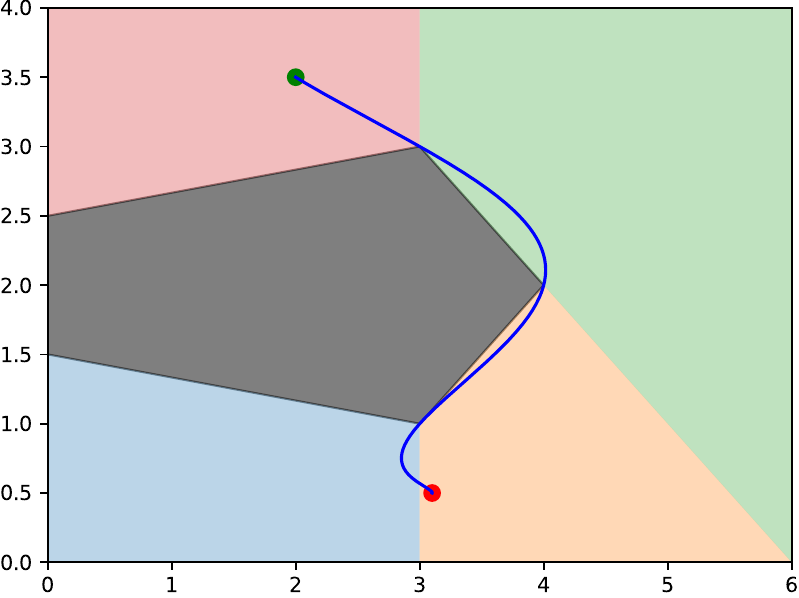}
    \caption{A simple 2D example with one obstacle (grey) illustrates the need to revisit a convex set to find a smooth trajectory from the start (red dot) to the goal (green dot). The free space is decomposed into four convex sets. In this example, the initial velocity at the start state in the orange convex set is set to a high nonzero value in the direction of the blue convex set to its left. As there are constraints on the smoothness of the trajectory at the convex set boundaries, the only solution is to exit the start orange set, enter the blue set, exit the blue set, and enter the orange set again before arriving at the goal. Finding this trajectory requires re-expanding the orange convex set during search which is not addressed in the original GCS trajectory optimization \cite{gcs}.}
    \label{fig:revisit}
\end{figure}

\subsection{Theoretical Analysis} 
\subsubsection{Stronger Completeness}
\label{sec:comp}
The stronger completeness discussed in this section is easier to appreciate in the context of dynamical systems. Consider a point robot with a high non-zero velocity starting near the edge of a convex set facing the direction of the neighboring convex set (Fig. \ref{fig:revisit}). To guarantee smoothness, the robot might have to enter the neighboring convex set and revisit the current convex set and this may have to be done several times depending on the environment. The current GCS formulation does not provide a way to handle this scenario and a trivial modification of maintaining multiple copies of each convex set will make the already big GCS optimization further expensive in addition to not guaranteeing completeness \footnote{The exact number of times a region has to be revisited cannot be exactly known ahead of time.}.

To combat this, the graph search underpinning the IxG* does not require prespecifying the number of times a convex region must be visited to find a solution and guarantee completeness. Instead, as the graph is built implicitly, the number of cycles can be adapted on demand and as per the need (Alg. \ref{alg:ixgs}, line \ref{line:ixgs_cycle}).  

\setlength{\textfloatsep}{4pt}
\begin{algorithm}
\begin{algorithmic}[1]

\Procedure{LBG}{$G_\gcsnode$} 
\State $G_{lb} = (V_{lb}, E_{lb}); V_{lb} = E_{lb} = \emptyset$
\State \textbf{for} $\gcsnode_c \in V_\gcsnode$ \textbf{do} \Comment{$G_\gcsnode = (V_\gcsnode, E_\gcsnode)$}
\State $\>$ \textbf{for} $\gcsnode_{p} \in$ Neighbors($\gcsnode_c$) \textbf{do} \label{line:lbg_oloop}
\State $\>$ $\>$ \textbf{for} $\gcsnode_{s} \in$ Neighbors($\gcsnode_c$) \textbf{do}
\State $\>$ $\>$ $\>$ \textbf{if} $\gcsnode_p \neq \gcsnode_c$ \textbf{then}
\State $\>$ $\>$ $\>$ $\>$ $\gcsnode^\dag = (\gcsnode_p, \gcsnode_c, \gcsnode_s)$
\State $\>$ $\>$ $\>$ $\>$ $\gcsnode^\ddag = (\gcsnode_s, \gcsnode_c, \gcsnode_p)$\label{line:lbg_tlet}
\State $\>$ $\>$ $\>$ $\>$ $q_{pcs}(t)$ = \textproc{Optimize}($\gcsnode^\dag$) \Comment{Eq. \ref{eq:pobj}}\label{line:lbg_opt1}
\State $\>$ $\>$ $\>$ $\>$ $q_{scp}(t)$ = \textproc{Optimize}($\gcsnode^\ddag$) \Comment{Eq. \ref{eq:pobj}}\label{line:lbg_opt2}
\State $\>$ $\>$ $\>$ $\>$ $V_{lb}$.Add($q_{pcs}(0)$); $V_{lb}$.Add($q_{pcs}(T)$)
\State $\>$ $\>$ $\>$ $\>$ $V_{lb}$.Add($q_{scp}(0)$); $V_{lb}$.Add($q_{scp}(T)$)
\State $\>$ $\>$ $\>$ $\>$ $E_{lb}$.Add($q_{pcs}(t)$) with cost $c(q_{pcs}(t))$ 
\State $\>$ $\>$ $\>$ $\>$ $E_{lb}$.Add($q_{scp}(t)$) with cost $c(q_{scp}(t))$ 
\State \textbf{for} $(\gcsnode_1, \gcsnode_2) \in E_\gcsnode$ \textbf{do} \Comment{$G_\gcsnode = (V_\gcsnode, E_\gcsnode)$} \label{line:lbg_olap}
\State $\>$ \textbf{for} $q_1, q_2 \in V_{lb}$ such that $q_1\neq q_2$ \textbf{do}
\State $\>$ $\>$ \textbf{if} $q_1, q_2 \in \gcsnode_1 \cap \gcsnode_2$ \textbf{then}
\State $\>$ $\>$ $\>$ $E_{lb}$.Add(($q_1, q_2$)) with cost 0 \Comment{LB edge} \label{line:lbg_zedge}
\State \textbf{return} $G_{lb}$
\EndProcedure
\end{algorithmic}
\caption{Lower Bound Graph (LBG) search}
\label{alg:lbg}
\end{algorithm}

\subsubsection{Trivial Parallelization of IxG*} Since the graph search component of IxG* searches over paths of convex sets and maintains a separate copy of every way to reach a particular convex set, there is no interdependence between nodes picked for expansion. As a result, depending on the thread budget, any number of nodes can be popped from the OPEN list in line \ref{line:ixgs_pop} of Alg. \ref{alg:ixgs} for concurrent expansion. The optimality and completeness of the algorithm will not be sacrificed so long as the order of expansion and the priority value are maintained. 

\subsubsection{Properties of IxG, IxG* and LBG}
Table. \ref{tab:tprop} lists the properties of optimality and completeness of IxG and IxG*. Detailed proofs of the properties are provided in Appendix 1 of the supplementary material. As noted before, the optimality and bounded suboptimality properties are stronger than what was provided for the GCS trajectory optimization \cite{gcs}. This is because IxG* enforces a factor of suboptimality of the solution in Alg. \ref{alg:ixgs} whereas it is only calculated as an after-the-fact statistic in \cite{gcs}. As explained above, IxG* also satisfies a stricter definition of completeness.

In the same vein, the proofs on the bound on the number of nodes in LBG and guarantees of generating provably admissible heuristic is provided in Appendix 2 of the supplementary material.  
\begin{table}[h!]
\centering
\begin{tabular}{c|ccc}
\textbf{} & \textbf{Optimality} & \textbf{Bounded Suboptimality} & \textbf{Completeness} \\ \hline
\textbf{IxG}  & \xmark  & \xmark & \checkmark\tablefootnote{Under the assumption that all the edge constraints can be satisfied}              \\
\textbf{IxG*} & \checkmark & \checkmark & \checkmark             \\
\end{tabular}
\caption{Optimality and completeness of IxG and IxG*.}
\label{tab:tprop}
\end{table}

\section{Experimental Results}
We evaluate the empirical performance of IxG and IxG* in simulation against GCS in three different applications: (1) a 2D system in a $50\times50$ maze environment, (2) UAV in a 3D highly cluttered environment with trees and buildings and (3) an assembly cell with three Motoman HC10DTP arms for assembly. All the methods are implemented in C++ in the backend and sometimes invoked from Python wrappers. Tests ran on a 128-core AMD Ryzen Threadripper Pro with 512GB of memory.

\subsection{2D Maze}
For the 2D maze environment, we utilized the same maze provided in \cite{gcs} and tested using 50 randomly sampled start-goal pairs. \textcolor{black}{The graph of convex sets is constructed by considering axis-aligned rectangles between the walls of the maze and their overlaps.} The goal of the planner is to generate a trajectory with $C^2$ continuity from start to goal. In this environment, there are 2500 convex sets and 5198 edges in the graph. The results show that IxG* consistently outperforms GCS in terms of planning time. However, with $\epsilon=6$, the solution cost is slightly larger than that of GCS (Table. \ref{tab:maze}). This is exactly the kind of trade-off between solution quality and planning time that can be obtained by controlling and pre-specifying the suboptimality factor. 

\begin{table}[htp!]
\centering
\begin{tabular}{c|cc}
\textbf{}                   & \textbf{GCS} & \textbf{IxG*} $(\epsilon=6)$ \\ \hline
\textbf{Success Rate (\%)}  & 100\%          & 100\%              \\
\textbf{Solution Cost}      & 52.636         & 52.832             \\
\textbf{Planning Time (s)}  & 6.728          & 1.2613              \\
\textbf{\# Optimized Edges} & 5198           & 440.94            
\end{tabular}
\caption{Various statistics show IxG* outperforming GCS in the 2D maze environment. Note that the higher solution cost of IxG* is because of planning with $\epsilon=6$.}
\label{tab:maze}
\end{table}

\begin{figure*}[htp!]
\centering
\begin{subfigure}{.5\columnwidth}
\includegraphics[width=\columnwidth]{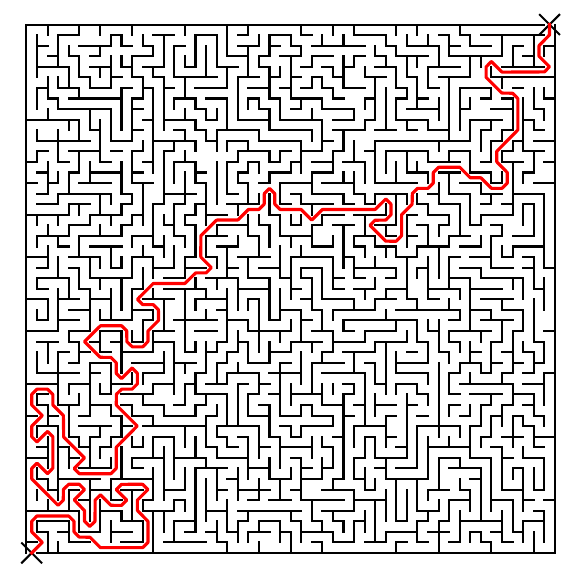}%
\caption{$\epsilon$ = 4}%
\label{m1}%
\end{subfigure}
\begin{subfigure}{.5\columnwidth}
\includegraphics[width=\columnwidth]{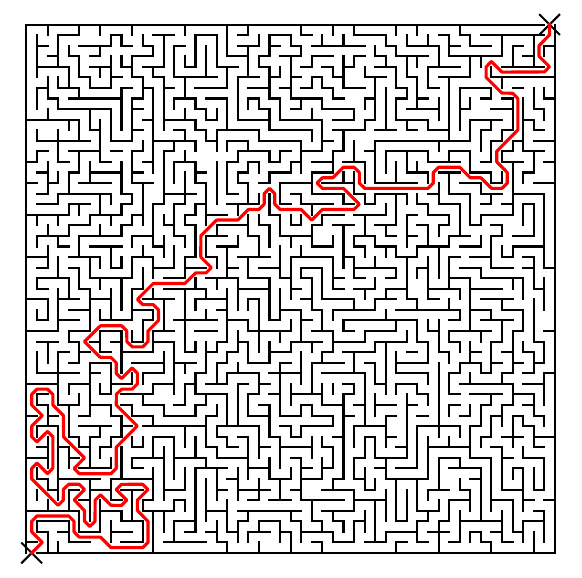}%
\caption{$\epsilon$ = 8}%
\label{m2}%
\end{subfigure}
\begin{subfigure}{.5\columnwidth}
\includegraphics[width=\columnwidth]{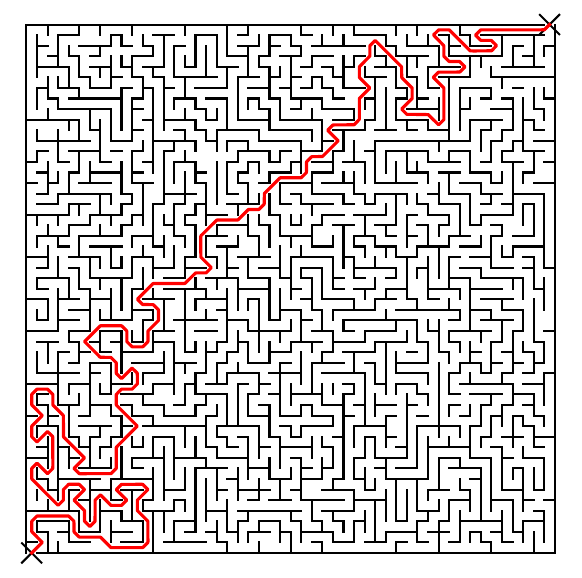}%
\caption{$\epsilon$ = 16}%
\label{m3}%
\end{subfigure}
\begin{subfigure}{.5\columnwidth}
\includegraphics[width=\columnwidth]{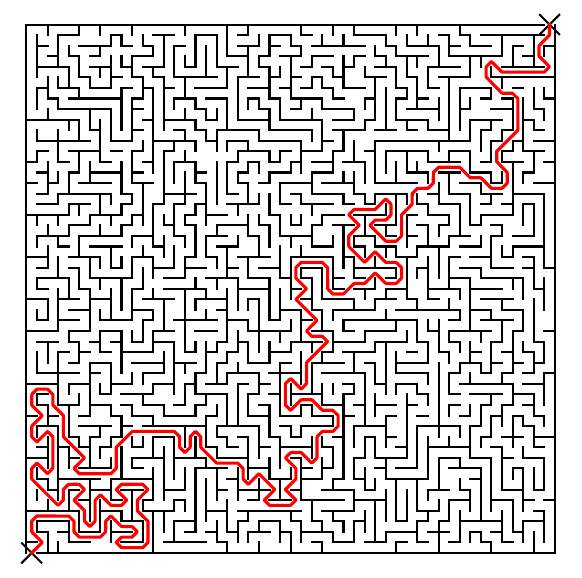}%
\caption{$\epsilon$ = 64}
\label{m4}%
\end{subfigure}%
\caption{Continuous trajectory synthesized by IxG* from one end of the maze to another for different inflations of the heuristic. These inflations are the factor of an upper bound on the cost of the optimal solution.}
\label{fig:mazesubopt}
\end{figure*}

\begin{figure}
    \centering
    \includegraphics[width=\columnwidth]{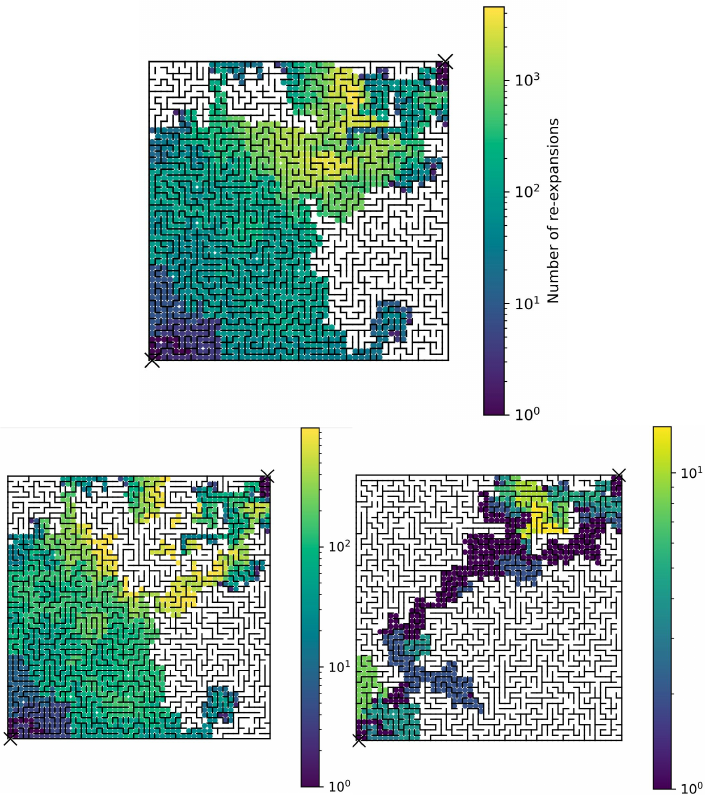}
    \caption{The effect of pruning using the LBG in IxG* as explained in Sec. \ref{sec:ixgs}, \ref{sec:lbg}. The color bar (see different scales for each) denotes the number of re-expansions of a convex region to satisfy the suboptimality factor. The figure at the top is without the pruning operation (\textit{i.e.} $u=\infty$ in Alg. \ref{alg:ixgs}, line \ref{line:ixgs_prune}) for $\epsilon=1$. The bottom figures denote the number of re-expansions with pruning for $\epsilon=1$ (left) and $\epsilon=6$ (right).}
    \label{fig:prune_effect}
\end{figure}

\begin{figure}
\centering
\begin{subfigure}{.49\columnwidth}
\centering
\includegraphics[width=\columnwidth]{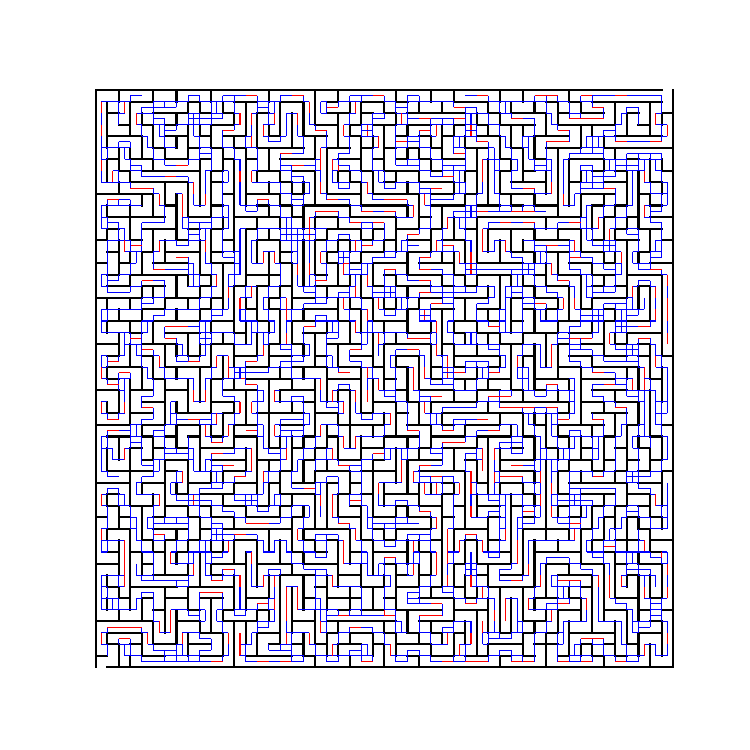}%
\caption{Trajectory is represented as B-splines with order=1.}
\label{fig:maze_lbg1}%
\end{subfigure}
\centering
\begin{subfigure}{.49\columnwidth}
\includegraphics[width=\columnwidth]{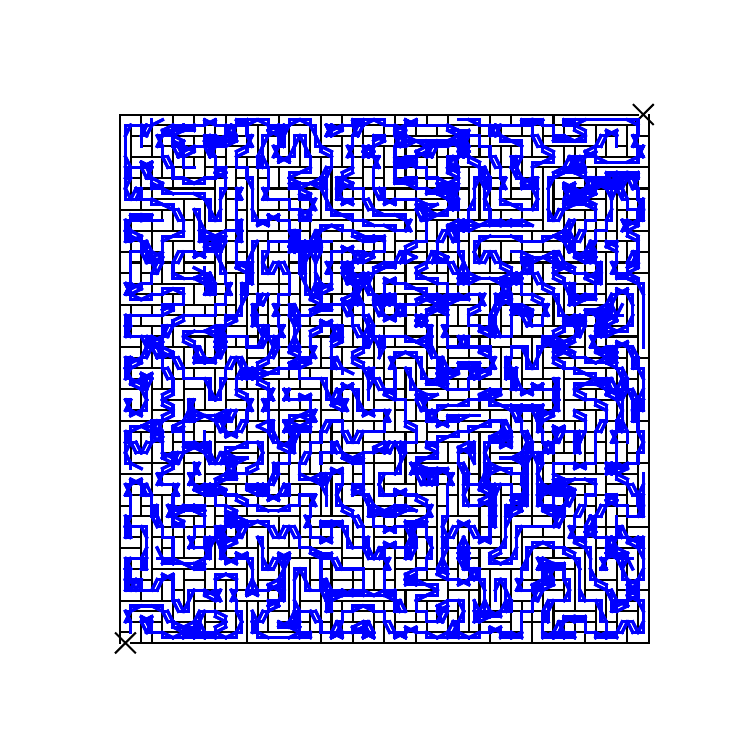}%
\caption{Trajectory is represented as B-splines with order=3.}
\label{fig:maze_lbg2}%
\end{subfigure}
\caption{Lower bound graph (LBG) computed for the 2D maze scenario. The red lines denote zero cost edges.}
\label{fig:maze_lbg}
\end{figure}

For the same pair of start and goal states, we ran IxG* with different inflations of the admissible LBG heuristic to show how the solution changes as we increase the upper bound on the cost of the optimal solution (see Fig. \ref{fig:mazesubopt}). \textcolor{black}{Fig. \ref{fig:varyeps_maze} shows how the solution quality and the runtime of the planner are impacted by $\epsilon$.} As discussed before, allowing re-opening, re-expanding, and revisiting states in IxG* can blow up the complexity of the search. This was controlled by the efficient pruning mechanism using LBG proposed in Sec. \ref{sec:ixgs}. We numerically show the effect of this pruning in Fig. \ref{fig:prune_effect}. The drastic reduction (see different scales in the color bar) in the number of node re-expansions (top vs bottom) implies the tightness of the lower bounded computed by LBG. The LBG used for pruning is provided in Fig. \ref{fig:maze_lbg}. Finally, we plot the effect of suboptimality factor (heuristic inflation) vs planning time in Fig. \ref{fig:hvstime}. As expected, we see that the planning time of IxG* goes down when allowing higher suboptimality of the solution. GCS trajectory optimization takes the same amount of time as there is no way to enforce lower fidelity of solutions. 

\begin{figure}
    \centering
    \includegraphics[width=0.8\columnwidth]{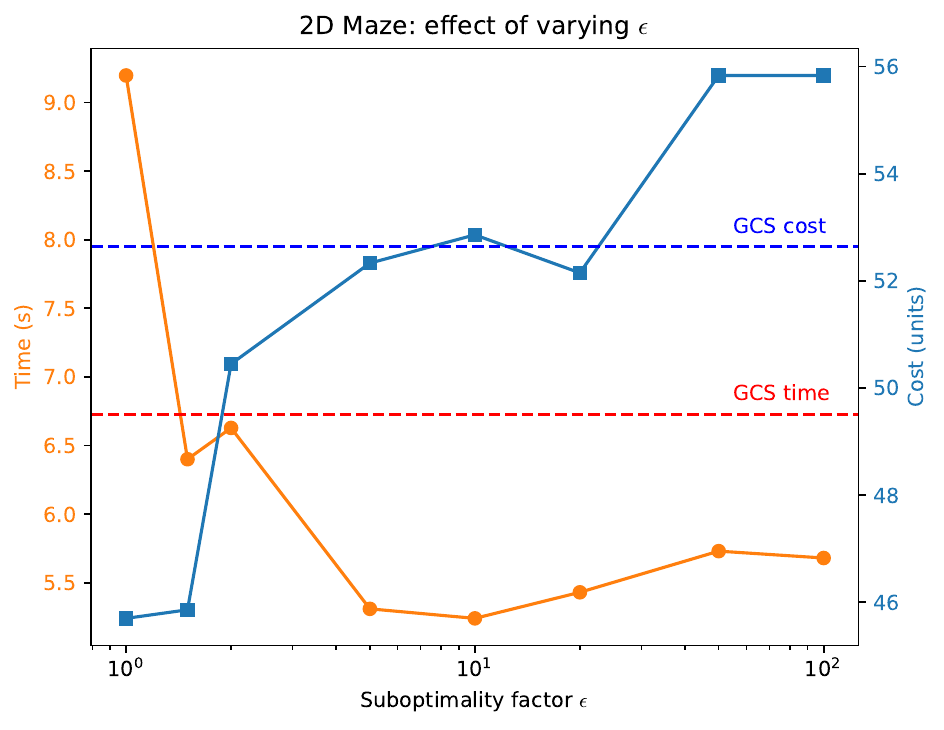}
    \caption{Effect of varying $\epsilon$ on the planning time and the solution cost for 2D maze.}
    \label{fig:varyeps_maze}
\end{figure}


\begin{figure}[htp!]
    \centering
    \includegraphics[width=0.7\columnwidth]{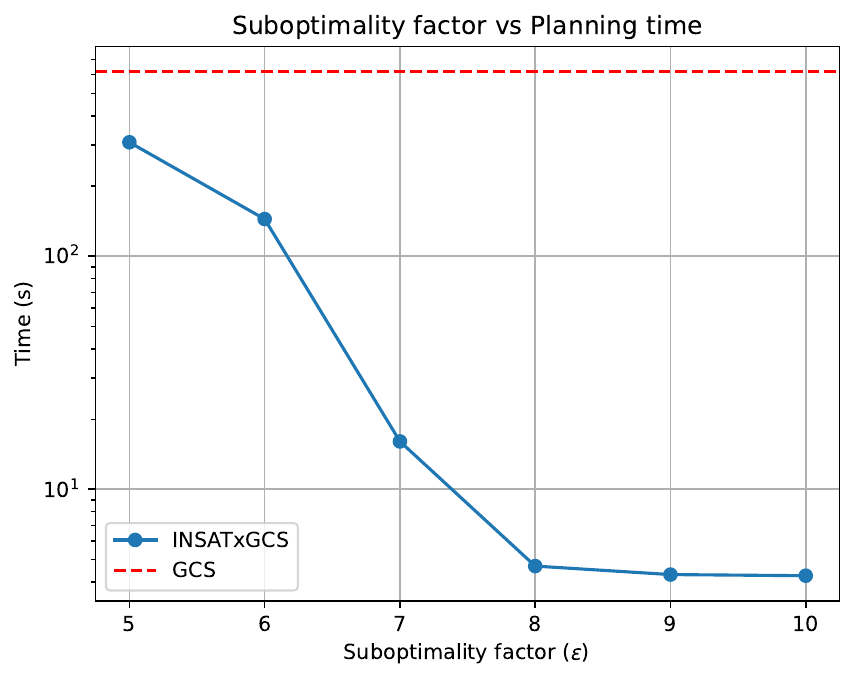}
    \caption{Decrease in planning time of IxG as we increase the suboptimality factor \textcolor{black}{for UAV planning in 50m$\times$50m map}. The GCS planning time is shown using a red dotted line}
    \label{fig:hvstime}
\end{figure}

\begin{table}[htp!]
\centering
\begin{tabular}{c|cc}
\textbf{}                   & \textbf{GCS} & \textbf{IxG*} $(\epsilon=10)$ \\ \hline
\textbf{Success Rate (\%)}  & 100\%              & 100\%              \\
\textbf{Solution Cost}      & 16.055             & 8.676              \\
\textbf{Planning Time (s)}  & 112.867            & 1.85              \\
\textbf{\# Optimized Edges} & 12346              & 211.6             
\end{tabular}
\caption{Statistics showing IxG* outperforming GCS in the 15m$\times$15m UAV forest environment. Note >60x improvement in planning time.}
\label{tab:uav}
\end{table}

\begin{figure}[htp!]
    \centering
    \includegraphics[width=\columnwidth]{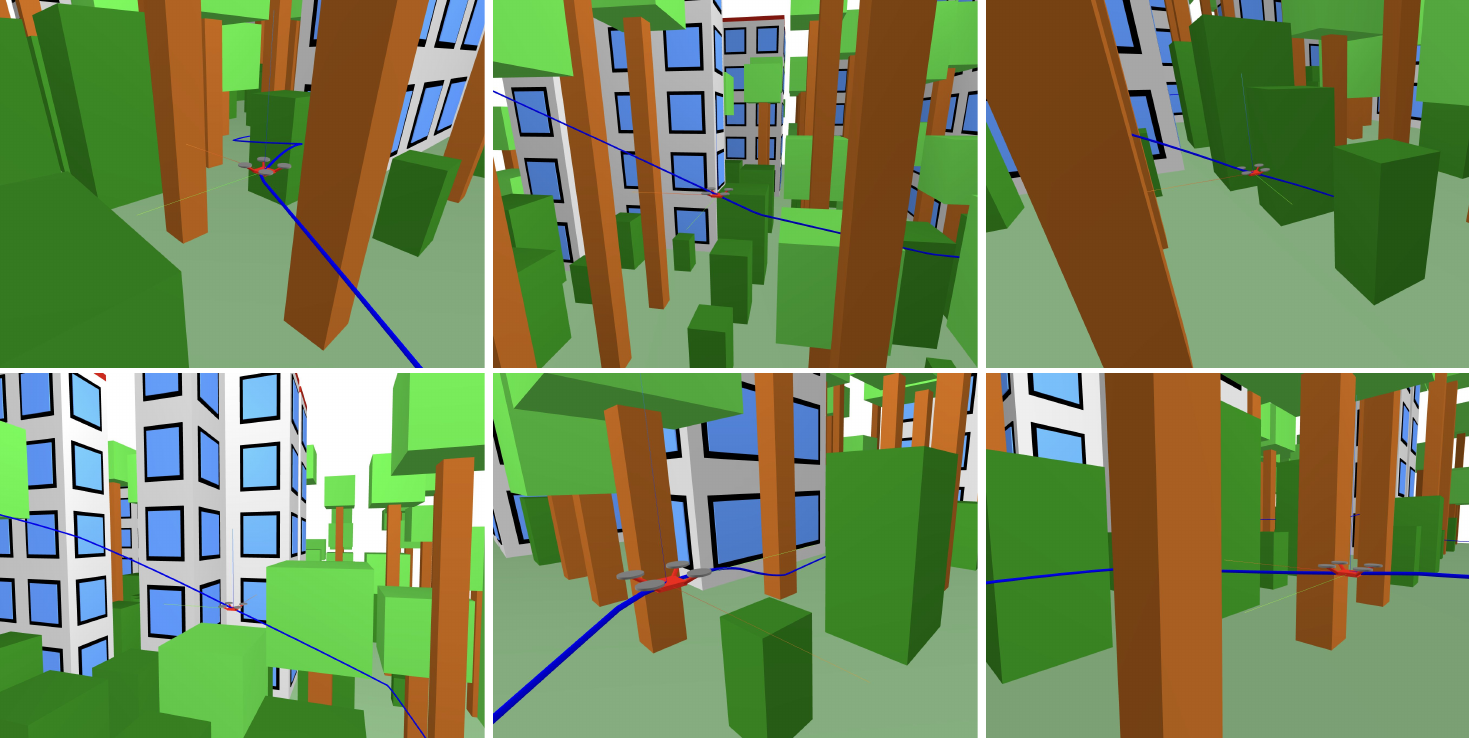}
    \caption{UAV flying through dense forest in simulation using plans generated with INSATxGCS}
    \label{fig:uav}
\end{figure}


\begin{table}[htp!]
\centering
\begin{tabular}{c|cc}
\textbf{}                   & \textbf{IxG*} $(\epsilon=15)$ \\ \hline
\textbf{Success Rate (\%)}  & 96\%              \\
\textbf{Solution Cost (rad)}      & 16.44             \\
\textbf{Planning Time (s)}  & 31.624              \\
\textbf{\# Optimized Edges} & 1832.24            
\end{tabular}
\caption{Results of IxG* for multi-arm manipulation planning. The GCS method was unable to load this problem into memory.}
\label{tab:moto}
\end{table}

\subsection{3D UAV}
We conducted experiments in randomly generated villages, complete with trees and buildings. \textcolor{black}{These maps are the same as what is used in \cite{fpp} and the graph of convex sets is made of axis-aligned 3D boxes and their overlaps.} In a 50m $\times$ 50m map, our environment comprised over 10,000 convex sets and 140,000 edges. In a 15m $\times$ 15m map, we had 900 convex sets and over 12,000 edges. For our experiments, we focused on the 15m $\times$ 15m village, where we compared the proposed IxG* against GCS (Fig. \ref{fig:uav}, Table. \ref{tab:uav}). We refrained from comparing IxG/IxG* with GCS on the 50m $\times$ 50m map. This was due to GCS requiring more than 600 seconds to find solutions, whereas INSATxGCS exhibited an average planning time of just 5.119 seconds. \textcolor{black}{We represented our trajectories using B-spline curves of order 5. We also compared against a recently proposed fast method called FPP for this environment \cite{fpp}. FPP took an average of 2.83s for 15m$\times$15m environment and 5.507s for 50m$\times$50m, thus taking longer than IxG* for both sizes of the map. Nonetheless, it is important to note that \cite{fpp} is applicable only for the graph of convex sets made of axis-aligned boxes and does not have any guarantees on the optimality of the solution unlike IxG*}. 

\subsection{Multi-Arm Manipulation}
The setup consists of three 6-DoF Motoman HC10DTP robot arms positioned around a C-shaped table (Fig. \ref{fig:moto}). The 18-dimensional configuration space is decomposed into 2092 regions using task-specific seeds of interest with a 68\% coverage. \textcolor{black}{With these seeds, the regions are grown using nonlinear programming as explained in \cite{irisnlp} using the implementation in Drake \cite{drake} (\texttt{IrisInConfigurationSpace()}).} The regions had a significant overlap among them leading to 133K edges. For these experiments, we restricted the maximum degree of the graph of convex sets to 8 and pruned significantly overlapping and redundant regions to a total of 67 regions with 2224 edges. We randomly sampled start and goal configurations with a deliberate bias to pick hard combinations that require intricate movements between the arms while being very close to each other. For this experiment, we could not compare against GCS as it was unable \textcolor{black}{to transcribe the whole graph into} costs and constraints \textcolor{black}{and load it entirely} into memory. \textcolor{black}{We even tried GCS with random subgraphs of the full graph used for experiments with IxG* and found that the maximum size of the subgraph that GCS solved had 10 vertices (compared to 67 in the original). Even for a subgraph of size 10, GCS took about 163s to solve and is already 5.2x slower than IxG*.}

IxG* exhibited promising performance, with an average planning time of 31.62s seconds (Table. \ref{tab:moto}). \textcolor{black}{Since IxG* enables informing the search with a suboptimality factor, the quality of the solution can be traded off with the speed of the planner. Fig. \ref{fig:varyeps_mramp} shows how the solution quality and the runtime of the planner are impacted by $\epsilon$. It can be noted that the planning time decreases and solution quality gets worse with increasing suboptimality factor.}

\begin{figure}
    \centering
    \includegraphics[width=0.8\columnwidth]{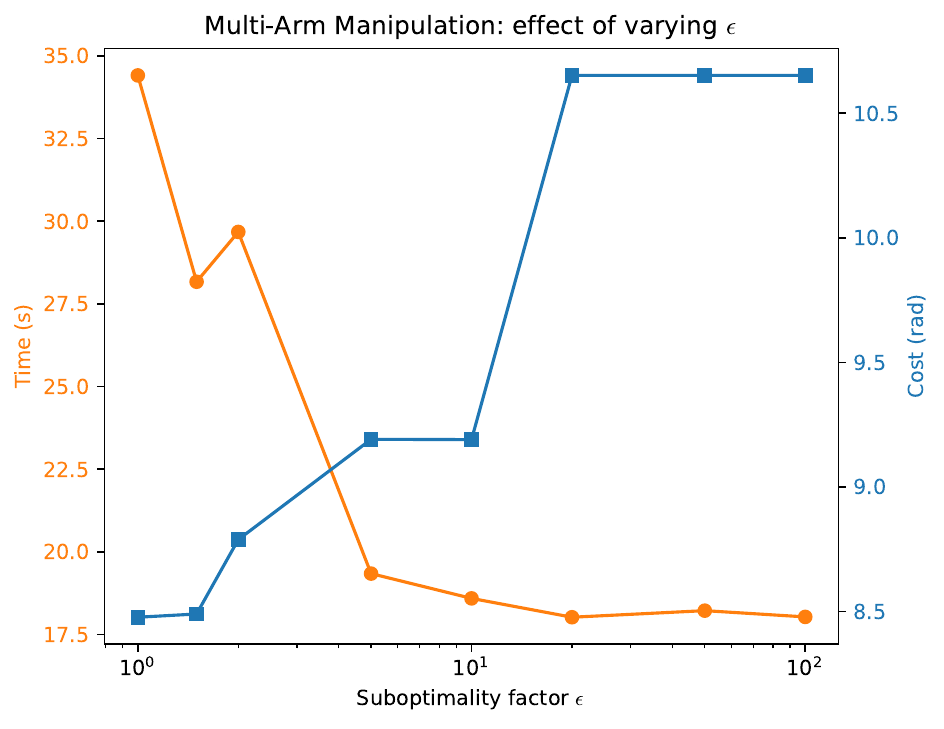}
    \caption{Effect of varying $\epsilon$ on the planning time and the solution cost for the multi-arm manipulation.}
    \label{fig:varyeps_mramp}
\end{figure}

\subsection{Reduction in Complexity}
As a consequence of using implicit search to plan over the graph of convex sets, there is a reduction in the complexity of the planning algorithm across several metrics like reduced size of the problem, a significant drop in the largest optimization call, etc. Some of these metrics were already reported in the above Tables. The size of the GCS graph for different domains are given in Table. \ref{tab:gcssize} along with the giant constant size of the optimization for any query in Table. \ref{tab:gcsoptsize}. As noted, IxG and IxG* solves many small optimization problems and almost never solves one of GCS' size. The largest optimization problem invoked by IxG* is given in Table. \ref{tab:ixgbig}.

\begin{table}[htp!]
\centering
\begin{tabular}{c|cc}
\textbf{}                       & \textbf{\# Convex Sets} & \textbf{\# Edges} \\ \hline
\textbf{2D Maze}                & 2500                   & 5198             \\
\textbf{UAV (15m $\times$ 15m)} & 900                    & 12346            \\
\textbf{UAV (50m $\times$ 50m)} & 10105                  & 140506           \\
\textbf{Motoman HC10DTP 3x Arms}      & 2092                   & 12445            
\end{tabular}
\caption{Details about the graph of convex sets}
\label{tab:gcssize}
\end{table}

\begin{table}[htp!]
\centering
\begin{tabular}{c|ccc}
                    & \textbf{\# Decision Vars} & \textbf{\# Costs} & \textbf{\# Constraints} \\ \hline
\textbf{2D Maze}       & 119586                    & 15202             & 256261                  \\
\textbf{UAV (15m $\times$ 15m)} & 555657                    & 17751             & 759756                  \\
\textbf{UAV (50m $\times$ 50m)} & 6322851                   & 201141            & 8642649                
\end{tabular}
\caption{GCS optimization problem size}
\label{tab:gcsoptsize}
\end{table}

\begin{table}[htp!]
\begin{tabular}{c|ccl}
& \textbf{2D Maze} & \textbf{UAV (15m $\times$ 15m)} & \textbf{UAV (50m $\times$ 50m)} \\ \hline
\textbf{\begin{tabular}[c]{@{}c@{}}Max \# \\ Decision \\ Vars\end{tabular}} & 1845             & 196                             & \multicolumn{1}{c}{1175}       
\end{tabular}
\caption{Largest optimization IxG* invoked}
\label{tab:ixgbig}
\end{table}

\subsection{Implementation Details}
An important implementation detail that contributes to the efficiency of IxG and IxG* lies in how the costs and constraints are built for the optimization. Though the graph is explored and the optimization is solved implicitly, for a given decomposition of the graph, the symbolic costs and constraints of all the GCS vertices and edges over the entire graph are constructed and cached offline. During runtime, when the search explores a particular path in the GCS graph, the cached symbolic costs and constraints are retrieved at virtually no cost and supplied to the convex solver. 

\section{Conclusion}
We showed that using INSAT to synthesize trajectories over the graph of convex sets is a better alternative compared to the GCS batch optimization. To that end, we developed two algorithms IxG and IxG*, and discussed trivial extensions to parallelized IxG* without sacrificing any of the theoretical properties. We can plan orders of magnitude faster with better theoretical properties. Graphs of convex sets tackled one of the biggest bottlenecks of collision checking in motion planning by representing the free planning space as a union of convex sets stored as a graph. This representation can make motion planning dramatically simple as every pair of points in a convex set are connected by a straight line and as optimization over convex sets is extremely well studied. However joint optimization over the entire graph is unnecessary in most instances and can limit the capability of the GCS representation. Using a search-based method also brings the advantage of all the planning techniques developed in discrete graph search. For example, there are many replanning and anytime algorithms in graph search that efficiently reuse search efforts in dynamically changing environments or return a quick time-bounded solution while improving the solution quality. We believe that the introduction of a graph search-based framework such as INSAT to explore and prune the convex sets while optimizing the trajectory via them widens the scope and application of GCS.

\section{Acknowledgements}
This work was supported by grants W911NF-21-1-0050 and W911NF-18-2-0218 of the ARL-sponsored A2I2 program. We thank Yorai Shaoul for his help in rendering some of the visualizations presented in this work.



\bibliographystyle{plainnat}
\bibliography{references}

\clearpage

\section{APPENDIX}
In the following sections, some of the definitions are repeated from the main portion of the paper for self-containedness.

\section*{Appendix 1}
\begin{table}[h!]
\centering
\begin{tabular}{c|ccc}
\textbf{} & \textbf{Optimality} & \textbf{Bounded Suboptimality} & \textbf{Completeness} \\ \hline
\textbf{IxG}  & \xmark  & \xmark & \checkmark\tablefootnote{Under the assumption that all the edge constraints can be satisfied}              \\
\textbf{IxG*} & \checkmark & \checkmark & \checkmark             \\
\end{tabular}
\caption*{Optimality and completeness of IxG and IxG*.}
\label{tab:tprop_appx}
\end{table}

\begin{definition}
    \textbf{Markov Property in Search:} It states that the cost and the set of successors of a state depend \textit{only} on the current state and not on the history of the path leading up to it.
\end{definition}

As mentioned before, in the case of planning over a graph of convex sets, the graph is explicitly provided. So generating the set of successors of a state does not violate the Markov property. However, computing the cost of the successor requires computing the edge/trajectory and satisfying constraints which itself could depend on the path of convex sets leading up to the successor. This breaks the Markov property in IxG. We alleviate this in IxG* by allowing re-expansions with duplicates

\begin{assump}
    There exists a path in $G_{\gcsnode}$ such that it contains $q_{0T}^*(t)$ minimizing Eq. \ref{eq:pobj}.
\end{assump}
\subsection{Properties of IxG*}
\begin{theorem}
    \textbf{Optimality of IxG*:} If Assumption 1 holds, then on expanding $\gcsnode^{0\ldots T}$, \textit{i.e.} when \textproc{Key}($\gcsnode^{0\ldots T}$) $>$ OPEN.min() (Alg. \ref{alg:ixgs}, line \ref{line:ixgs_term}), $\gcsnode^{0\ldots T}$.trajectory() will return $q_{0T}^*(t)$, where $q_{0T}^*(t)$ is the global minimizer of Eq. \ref{eq:pobj}. Since this is proof of optimality, $\epsilon=1$ in Alg. \ref{alg:ixgs}.
\end{theorem}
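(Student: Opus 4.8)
The plan is to adapt the classical A$^*$ optimality argument to the tree-search-over-paths setting of IxG$^*$, resting on three facts. First, with $\epsilon = 1$ the key of a path is $g(\gcsnode^{a\ldots b}) + l(\gcsnode_b)$, where $l$ is an \emph{admissible} (underestimating) cost-to-go produced by the backward Dijkstra search on the LBG (established in Appendix~2); in particular $l(\gcsnode_T) = 0$. Second, for any fixed sequence of convex sets $\gcsnode^{0\ldots b}$, the value $g(\gcsnode^{0\ldots b})$ returned by \texttt{Optimize} is \emph{exactly} the minimum trajectory cost subject to passing through that sequence, since Eq.~\ref{eq:pobj} is a convex program solved to global optimality. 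Third, the bound $u$ computed from the IxG solution (Alg.~\ref{alg:ixgs}, lines~\ref{line:ixgs_ub1}--\ref{line:ixgs_ub2}) is a genuine upper bound on the optimal cost, $u \ge c(q_{0T}^*) =: C^*$, because IxG returns a feasible goal trajectory.

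First I would show that the pruning step (line~\ref{line:ixgs_prune}) never discards a prefix of an optimal path. Let $P^* = (\gcsnode_0, \ldots, \gcsnode_T)$ be a sequence realizing $q_{0T}^*$ (Assumption~1), and let $p$ denote the point at which $q_{0T}^*$ crosses from $\gcsnode_i$ into $\gcsnode_{i+1}$. The restriction of $q_{0T}^*$ to the prefix $\gcsnode^{0\ldots i}$ is feasible for Eq.~\ref{eq:pobj}, so $g(\gcsnode^{0\ldots i})$ is at most the cost of $q_{0T}^*$ up to $p$; and because $l$ underestimates the cost-to-go from \emph{every} point of $\gcsnode_i$, we have $l(\gcsnode_i) \le$ the cost of the remaining segment of $q_{0T}^*$ from $p$ to $q_T$. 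Adding these yields $g(\gcsnode^{0\ldots i}) + l(\gcsnode_i) \le C^* \le u$, so with $\epsilon=1$ the pruning inequality in line~\ref{line:ixgs_prune} never fires on an optimal prefix.

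Next I would establish the invariant that, at the top of every loop iteration before the goal is returned, OPEN contains a prefix of $P^*$ whose key is at most $C^*$. This follows by induction: $\gcsnode^0$ is inserted initially and is the length-zero prefix; whenever the longest generated prefix $\gcsnode^{0\ldots i}$ is popped and expanded, the next vertex $\gcsnode_{i+1}$ of $P^*$ lies among \text{Successors}($\gcsnode_i$) because $P^*$ is a valid path in $G_\gcsnode$, the extended prefix survives pruning by the previous paragraph, and it is inserted into OPEN with key $g(\gcsnode^{0\ldots i+1}) + l(\gcsnode_{i+1}) \le C^*$ by the same prefix bound. Crucially, because IxG$^*$ keeps a separate copy of every path and maintains no CLOSED list (allowing re-opening, re-expansion, and optional cycles), this chain of prefixes is never blocked --- which is exactly where the violation of the Markov property is neutralized relative to IxG.

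Finally I would conclude optimality at termination. When $\gcsnode^{0\ldots T}$ is selected for expansion (line~\ref{line:ixgs_term}), its key equals $g(\gcsnode^{0\ldots T})$ since $l(\gcsnode_T)=0$, and by the selection rule this key does not exceed the minimum key remaining in OPEN, which by the invariant is at most $C^*$; hence $g(\gcsnode^{0\ldots T}) \le C^*$. Conversely $g(\gcsnode^{0\ldots T})$ is the cost of a feasible trajectory to the goal and so cannot beat the global optimum, giving $g(\gcsnode^{0\ldots T}) \ge C^*$. Therefore $g(\gcsnode^{0\ldots T}) = C^*$ and $\gcsnode^{0\ldots T}.\text{trajectory}()$ returns $q_{0T}^*$. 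I expect the main obstacle to be the bookkeeping in the second and third steps: because $g$ optimizes the exit point within each convex set freely while $l$ is only a \emph{uniform} lower bound over the whole set, the inequality $g(\gcsnode^{0\ldots i}) + l(\gcsnode_i) \le C^*$ must be argued by comparison against the single trajectory $q_{0T}^*$ rather than by matching exit points, and one must verify that re-expansion reliably regenerates the optimal prefix chain into OPEN rather than skipping it.
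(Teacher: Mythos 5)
Your proof is correct and, at its core, runs on the same classical A\textsuperscript{*}-optimality machinery as the paper's, but you package it differently and more completely. The paper argues by induction over expansions plus contradiction: assume the popped path's $g$-value is suboptimal, exhibit a prefix of the optimal sequence sitting in OPEN whose key $g + l$ (with $l$ admissible) would be strictly smaller, and contradict the argmin selection rule. You instead establish the Hart--Nilsson--Raphael-style invariant that OPEN always contains a prefix of the optimal sequence with key at most $C^*$, and conclude by sandwiching $g(\gcsnode^{0\ldots T})$ between $C^*$ from feasibility and the min-key bound at goal expansion. Both routes prove the theorem, but yours makes explicit two things the paper leaves implicit or omits. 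First, the invariant formulation pinpoints exactly why tree search over paths (no CLOSED list, separate copies of every way to reach a set) repairs the Markov-property violation: the optimal prefix chain can never be blocked by a prior expansion. Second, and more importantly, you prove that the pruning test in line \ref{line:ixgs_prune} never discards a prefix of the optimal sequence; since pruning is active even at $\epsilon = 1$ (with threshold $u$ set from the IxG solution), this lemma is genuinely required for the theorem as stated, and its complete absence from the paper's proof is a gap that your proposal fills. One point to tighten: your assertion that $l$ underestimates the cost-to-go from every point of $\gcsnode_i$ should be grounded in the LBG admissibility results of Appendix 2 --- those theorems bound the optimal set-to-set trajectory cost, which in turn lower-bounds the cost of the tail of $q_{0T}^*(t)$ from its crossing point --- rather than taken as given; the paper defers this in the same way, so this is a citation issue, not a logical one.
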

\begin{proof}
    The proof is by induction. Let us assume that $\forall \gcsnode^{0\ldots b}$ which are expanded, $\ g(\gcsnode^{0\ldots b}) = g^*(\gcsnode^{0\ldots b})$. This is trivially true for $\gcsnode_{0}$. We know that OPEN separates parts of the path from $\gcsnode_0$ to goal $\gcsnode_T$ that are expanded from the parts of the path that are never seen. The next state to be expanded is given by (Alg. \ref{alg:ixgs}, line \ref{line:ixgs_pop} and \ref{line:key})
    $$\gcsnode^{0\ldots c} = \argmin_{\gcsnode^{0\ldots d} \in \text{OPEN}} g(\gcsnode^{0\ldots d}) + l(\gcsnode_d).$$
    Let us assume $g(\gcsnode^{0\ldots d})$ is suboptimal. Then there must be at least one path $\gcsnode^{0\ldots e}$ in OPEN that is part of the optimal path from $0$ to $d$ on GCS which contains the optimal trajectory $q_{0d}^*(t)$. So
    $$g(\gcsnode^{0\ldots e}) + l(\gcsnode_e) \ge g(\gcsnode^{0\ldots c}) + l(\gcsnode_c)$$
    But 
    $$g(\gcsnode^{0\ldots e}) + c((\gcsnode^{0\ldots c}, \gcsnode^{c\ldots e})) < g(\gcsnode^{0\ldots c})$$
    $$\Rightarrow g(\gcsnode^{0\ldots e}) + c((\gcsnode^{0\ldots c}, \gcsnode^{c\ldots e})) + l(\gcsnode_e) < g(\gcsnode^{0\ldots c})  + l(\gcsnode_c)$$
    $$\Rightarrow g(\gcsnode^{0\ldots e}) + l(\gcsnode_e) < g(\gcsnode^{0\ldots c})  + l(\gcsnode_c)$$
    The above step is a contradiction to our assumption that $g(\gcsnode^{0\ldots d})$ is suboptimal. Hence it must be the case that $g(\gcsnode^{0\ldots d})$ is optimal upon expansion. 
\end{proof}

\begin{theorem}
    \textbf{Bounded suboptimality of IxG*:} If Assumption 1 holds, using $\epsilon > 1$ to prune duplicates under the criteria $c(q_{0s}(t)) + \epsilon*l(\gcsnode_s) > \epsilon*u$ (Alg. \ref{alg:ixgs}, line \ref{line:ixgs_prune}), the termination condition \textproc{Key}($\gcsnode^{0\ldots T}$) $>$ OPEN.min() (Alg. \ref{alg:ixgs}, line \ref{line:ixgs_term}), will return trajectory $q_{0T}^\sim(t)$, whose cost $c(q_{0T}^\sim(t)) \le \epsilon*c(q_{0T}^*(t))$ where $q_{0T}^*(t)$ is the global minimizer of Eq. \ref{eq:pobj}.
\end{theorem}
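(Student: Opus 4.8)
The plan is to recognize IxG* as weighted A* over the implicit graph whose nodes are the paths $\gcsnode^{0\ldots b}$, and to reuse the inductive skeleton of the preceding optimality proof with its exact equalities relaxed to $\epsilon$-inflated inequalities. Two preliminary facts are needed. First, the heuristic is admissible: since $l(\cdot)$ is produced by an optimal Dijkstra search on the lower bound graph, $l(\gcsnode_b)\le h^*(\gcsnode_b)$ for the true optimal cost-to-go $h^*$, and $l(\gcsnode_T)=0$. Second, $u$ is a valid upper bound on $c^*:=c(q_{0T}^*(t))$: the IxG call in line~\ref{line:ixgs_ub1} returns a feasible start-to-goal trajectory, hence $c(q_{0T}(t))\ge c^*$ and $u=\epsilon\,c(q_{0T}(t))\ge c^*$ (line~\ref{line:ixgs_ub2}).

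First I would prove that the pruning rule of line~\ref{line:ixgs_prune} never discards a prefix of the optimal path. For the prefix $\gcsnode^{0\ldots s^*}$ ending at $\gcsnode_{s^*}$, the optimal-substructure identity $g^*(\gcsnode^{0\ldots s^*})+h^*(\gcsnode_{s^*})=c^*$, together with admissibility and $\epsilon\ge 1$, yields
\[
g^*(\gcsnode^{0\ldots s^*})+\epsilon\,l(\gcsnode_{s^*})\le g^*(\gcsnode^{0\ldots s^*})+\epsilon\,h^*(\gcsnode_{s^*})\le \epsilon\bigl(g^*(\gcsnode^{0\ldots s^*})+h^*(\gcsnode_{s^*})\bigr)=\epsilon\,c^*\le\epsilon\,u,
\]
so the pruning test $c(q_{0s}(t))+\epsilon\,l(\gcsnode_s)>\epsilon\,u$ fails on this prefix and it is retained. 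Because IxG* keeps duplicates and maintains no CLOSED list, this survival guarantee combined with the OPEN-separation argument of the optimality proof (every start-to-goal path contains a node currently in OPEN) shows that, at every iteration before $\gcsnode^{0\ldots T}$ is selected, some optimal-path prefix $\gcsnode^{0\ldots s^*}$ is in OPEN with recorded cost equal to the optimal cost-to-come, $g(\gcsnode^{0\ldots s^*})=g^*(\gcsnode^{0\ldots s^*})$.

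I would then close the bound at termination. At line~\ref{line:ixgs_term} the goal path is selected as the minimum-key element of OPEN, so its key does not exceed that of the surviving optimal prefix. Using $l(\gcsnode_T)=0$ and the previous inequality,
\[
c(q_{0T}^\sim(t))=g(\gcsnode^{0\ldots T})+\epsilon\,l(\gcsnode_T)\le g^*(\gcsnode^{0\ldots s^*})+\epsilon\,l(\gcsnode_{s^*})\le\epsilon\,c^*,
\]
which is precisely the claimed bound $c(q_{0T}^\sim(t))\le\epsilon\,c(q_{0T}^*(t))$.

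The main obstacle I anticipate is justifying the invariant $g(\gcsnode^{0\ldots s^*})=g^*(\gcsnode^{0\ldots s^*})$ in the GCS setting, where a path cost is produced by the coupled convex trajectory optimization of Eq.~\ref{eq:pobj} (segments linked through continuity constraints) rather than by additive scalar edge weights. I would invoke Assumption~1 and argue that restricting the global minimizer $q_{0T}^*(t)$ to a prefix is feasible for the prefix problem and realizes its optimal cost, so the prefix optimization inherits the optimal substructure, and that the path-based search with re-expansion regenerates every optimal prefix despite the absence of a CLOSED list. A secondary subtlety is that $\epsilon$ enters both the key and the pruning threshold, so pruning safety must be checked with exactly the inflation used in the key for the two to remain consistent.
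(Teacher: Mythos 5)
Your proposal is correct and takes essentially the same route as the paper: the paper's entire proof is the single remark that the bound ``follows from the above proof of optimality applied to paths instead of vertices in the bounded suboptimality proof of wA*,'' and your write-up is exactly that wA*-over-paths argument carried out in full (admissibility of $l$ from the LBG, validity of the upper bound $u$, safety of the pruning rule for optimal-path prefixes, and the termination inequality), supplying details the paper leaves implicit. One minor repair: since the prefix and suffix optimizations of Eq.~\ref{eq:pobj} are decoupled at the junction, the optimal-substructure relation is the inequality $g^*(\gcsnode^{0\ldots s^*})+h^*(\gcsnode_{s^*})\le c(q_{0T}^*(t))$ rather than an equality, but that is the direction your chain of inequalities actually uses, so the argument is unaffected.
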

\begin{proof}
    The proof follows from the above proof of optimality applied to paths instead of vertices in the bounded suboptimality proof of wA* \cite{pohlwastar}.
\end{proof}



\section*{Appendix 2}
\subsection{Properties of Lower Bound Graph (LBG)}
The set of convex sets $\gcsnode = \{\gcsnode_1, \gcsnode_2, \ldots, \gcsnode_n\}\subset \mathbb{R}^d$ are represented as a graph $G_\gcsnode = (V_\gcsnode, E_\gcsnode)$. Some of the properties of the lower bound graph $G_{lb} = (V_{lb}, E_{lb})$ include 
\begin{itemize}
    \item The number of vertices is bounded by 
    $$\mid V_{lb}\mid \le 2\sum_{i=1}^{n} |E_{\gcsnode_i}^{in}||E_{\gcsnode_i}^{out}|$$
    where $E_{\gcsnode_i}^{in}$ and $E_{\gcsnode_i}^{out}$ are the incoming edges to and outgoing edges from vertex $\gcsnode_i$. 
    \item The number of edges is bounded by
    $$\mid E_{lb}\mid \le \frac{\mid V_{lb}\mid}{2} + \sum_{e_\gcsnode \in E_\gcsnode} \mid V_{lb}^{e_\gcsnode} \mid^2 $$
    where $V_{lb}^{e_\gcsnode} = \{ v_{lb}\in V_{lb} \mid v_{lb} \in e_\gcsnode\}$. An example of $v_{lb} \in e_\gcsnode$ when the convex sets are overlapping can be $v_{lb} \in \gcsnode_u \cap \gcsnode_v$ where $e_\gcsnode = (\gcsnode_u, \gcsnode_v)$.
    \item The degree of LBG is 
    $$deg(G_{lb}) = \max_{e_\gcsnode \in E_\gcsnode} \mid V_{lb}^{e_\gcsnode} \mid $$
\end{itemize}

Let $\lbgtogcs:V_{lb} \rightarrow V_\gcsnode$ denote a many-to-one mapping from the vertices in $G_{lb}$ to the vertices in $G_\gcsnode$. The mapping $\lbgtogcs^{-1}$ can be easily aggregated by keeping track of $v_\gcsnode \in V_\gcsnode$ for every $\gcsnode_i \in V_\gcsnode$ when constructing LBG (Alg. \ref{alg:lbg}).

\begin{theorem}
    Consider a sequence of convex sets $\gcsnode_{1\ldots K} = (\gcsnode_1, \gcsnode_2, \ldots, \gcsnode_K) \in V_\gcsnode$ connected by edges from $E_\gcsnode$. Let $q_{1K}^*(t)$ be the optimal trajectory via the sequence satisfying Eq. \ref{eq:pobj} and $G_{lb}^{1\ldots K} = (V_{lb}^{1\ldots K}, E_{lb}^{1\ldots K})$ be a subgraph of $G_{lb}$ corresponding to this sequence $\gcsnode_{1\ldots K}$. Let $v_{lb}^k \in \lbgtogcs^{-1}(\gcsnode_k)$. Then the cost of the optimal path $p_{1K}^*$ from $v_{lb}^1$ to $v_{lb}^K$ in the subgraph $G_{lb}^{1\ldots K}$ underestimates $q_{1K}^*(t)$ 
    $$c(p_{1K}^*) \le c(q_{1K}^*(t))$$
\end{theorem}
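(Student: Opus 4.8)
The plan is to establish the inequality by exhibiting a \emph{witness} path in $G_{lb}^{1\ldots K}$ whose cost does not exceed $c(q_{1K}^*(t))$, and then invoking optimality of $p_{1K}^*$: since $p_{1K}^*$ is by definition the cheapest path from $v_{lb}^1$ to $v_{lb}^K$ in $G_{lb}^{1\ldots K}$, any witness of cost at most $c(q_{1K}^*(t))$ forces $c(p_{1K}^*) \le c(q_{1K}^*(t))$. The foundational observation behind a cheap witness is a \textbf{feasibility/relaxation lemma}: for any contiguous subsequence of $\gcsnode_{1\ldots K}$, the optimization of Eq.~\ref{eq:pobj} solved over that subsequence with \emph{free} endpoints retains only the in-set membership (Eq.~\ref{eq:pobj2}), velocity (Eq.~\ref{eq:pobj3}), and internal continuity (Eq.~\ref{eq:pobj4}) constraints, dropping any coupling to the remaining segments. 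Hence the restriction of $q_{1K}^*(t)$ to that subsequence is a feasible point of the subproblem, and since the objective in Eq.~\ref{eq:pobj1} is additive over segments, the subproblem optimum is at most the cost of that restriction. Applied to the length-$3$ subsequences, this shows every blue LBG edge cost $c(q_{pcs}(t))$ produced in Alg.~\ref{alg:lbg}, lines~\ref{line:lbg_opt1}--\ref{line:lbg_opt2} underestimates the corresponding triplet portion of $q_{1K}^*(t)$.

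First I would record the structural facts the witness relies on: the blue edges of $G_{lb}^{1\ldots K}$ are the optimal triplet trajectories centered at $\gcsnode_2,\ldots,\gcsnode_{K-1}$, its vertices are the endpoints of those trajectories, and its red edges join, at zero cost, any two vertices lying in a common overlap $\gcsnode_k \cap \gcsnode_{k+1}$ (Alg.~\ref{alg:lbg}, lines~\ref{line:lbg_olap}--\ref{line:lbg_zedge}). I would then observe that the junction points $x_k = q_k^*(T_k) = q_{k+1}^*(0)$ of the optimal trajectory satisfy $x_k \in \gcsnode_k \cap \gcsnode_{k+1}$, so $q_{1K}^*(t)$ factors through exactly the overlap structure on which the zero-cost edges are defined.

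The witness is then assembled by chaining blue triplet edges along the sequence and splicing in zero-cost red edges wherever the exit of one triplet edge and the entry of the next lie in a common overlap. Because the cost is additive over segments, the total witness cost is a sum of triplet optima, each bounded by the matching piece of $q_{1K}^*(t)$ via the lemma, with the interleaved red edges contributing nothing. Summing and using that the selected triplets cover the segments of $q_{1K}^*(t)$ without any segment being charged twice yields witness cost $\le c(q_{1K}^*(t))$, and optimality of $p_{1K}^*$ closes the argument.

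The hard part will be the cost accounting in the final step. Consecutive center-$\gcsnode_k$ triplets share two sets, so naively summing their optima double-counts the shared segments, and the triplet endpoints (the only LBG vertices available) need not coincide with the junctions $x_k$ of $q_{1K}^*(t)$; even anchoring the witness at the prescribed terminals $v_{lb}^1$ and $v_{lb}^K$ requires zero-cost connectivity within the first and last overlaps. The crux is therefore to choose a chaining whose charged portions tile the sequence disjointly and to certify that the zero-cost red edges always supply the connective tissue in the overlaps $\gcsnode_k \cap \gcsnode_{k+1}$ --- precisely the role for which they were inserted at zero cost in Alg.~\ref{alg:lbg}. Once disjoint covering and connectivity are secured, the underestimate follows from additivity of the objective together with the relaxation lemma alone.
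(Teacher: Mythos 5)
Your scaffolding --- a relaxation lemma for the triplet subproblems, a witness path assembled from blue triplet edges and zero-cost red edges, and optimality of $p_{1K}^*$ to finish --- is the right one, and it is in fact a more explicit version of the paper's own, very terse, proof: the paper simply observes that Alg.~\ref{alg:lbg} solves the same minimization as Eq.~\ref{eq:pobj} with a subset of the constraints (no continuity or smoothness enforced across triplets) and concludes the underestimate, glossing over the accounting entirely. However, your proposal does not close, and it fails exactly at the crux you flag and then leave open. With your relaxation lemma as stated --- the \emph{full} restriction of $q_{1K}^*(t)$ to a triplet is the feasible point for that triplet's subproblem --- the blue edge centered at $\gcsnode_k$ is bounded only by $c(q_{k-1}^*)+c(q_k^*)+c(q_{k+1}^*)$, so chaining consecutive triplets certifies roughly $3\,c(q_{1K}^*(t))$, not $c(q_{1K}^*(t))$. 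Your proposed escape, a chaining whose triplets tile the sequence disjointly, is a dead end: if consecutive blue edges in the chain span disjoint triplets (say centers $\gcsnode_2$ and $\gcsnode_5$), the first terminates at a vertex in $\gcsnode_2\cap\gcsnode_3$ while the second originates at a vertex in $\gcsnode_4\cap\gcsnode_5$, and Alg.~\ref{alg:lbg}, lines~\ref{line:lbg_olap}--\ref{line:lbg_zedge} insert zero-cost edges only between vertices lying in one and the same overlap $\gcsnode_u\cap\gcsnode_v$ with $(\gcsnode_u,\gcsnode_v)\in E_\gcsnode$; the red connective tissue you are counting on is absent precisely when the tiling is disjoint.

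The missing idea is to strengthen the per-triplet bound rather than thin the chain. Because the triplet subproblem has \emph{free} outer endpoints, a feasible witness for the subproblem centered at $\gcsnode_k$ is not the full three-segment restriction but a collapsed one: an arbitrarily short suffix of $q_{k-1}^*$, all of $q_k^*$, and an arbitrarily short prefix of $q_{k+1}^*$. Continuity and derivative matching at the two junctions are inherited from $q_{1K}^*(t)$, so this is feasible, and its cost tends to $c(q_k^*)$; hence the blue edge centered at $\gcsnode_k$ costs at most $c(q_k^*)$, i.e.\ each blue edge is charged only for its \emph{center} set. Now take the consecutive chain centered at $\gcsnode_2,\ldots,\gcsnode_{K-1}$: the charged portions (the centers) tile the sequence disjointly even though the triplets themselves overlap, giving witness cost at most $\sum_{k=2}^{K-1} c(q_k^*) \le c(q_{1K}^*(t))$, while the overlap of consecutive triplets is exactly what lets the red edges connect them (granting, as the paper implicitly does, that free-endpoint triplet optima terminate in the junction overlaps --- the same assumption that makes $G_{lb}$ connected and $p_{1K}^*$ well defined, including its anchoring at the prescribed terminals $v_{lb}^1$ and $v_{lb}^K$). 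With that single strengthening your argument closes; without it, the accounting cannot come out right.
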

\begin{proof}
    Given a triplet sequence of convex sets, Alg. \ref{alg:lbg} constructs the LBG edges by solving the same minimization as the IxG* (Eq. \ref{eq:pobj}) with a subset of constraints. For example, LBG construction does not have to satisfy some of the edge constraints of GCS such as continuity or smoothness. Therefore, by satisfying fewer constraints for the same sequence of convex sets, the cost of LBG solution is upper-bounded by the optimal solution in GCS.
\end{proof}

\begin{theorem}
    Consider $v_{lb}^s, v_{lb}^t \in V_{lb}$. Let $\gcsnode_s = \lbgtogcs(v_{lb}^s)$ and $\gcsnode_t = \lbgtogcs(v_{lb}^t)$. Let the optimal trajectory from $\gcsnode_s$ to $\gcsnode_t$ that satisfies Eq. \ref{eq:pobj} be $q_{st}^*(t)$. Then 
    $$c(r_{st}^*) \le c(q_{st}^*(t))$$
    where $r_{st}^*$ is the optimal path on $G_{lb}$ from $v_{lb}^s$ to $v_{lb}^t$.
\end{theorem}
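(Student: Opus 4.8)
The plan is to reduce this statement to the previous theorem (Theorem 3) through a subgraph-monotonicity argument, so that no new optimization reasoning is needed. The optimal trajectory $q_{st}^*(t)$ from $\gcsnode_s$ to $\gcsnode_t$ must traverse some sequence of convex sets $\gcsnode_{1\ldots K} = (\gcsnode_1, \ldots, \gcsnode_K)$ with $\gcsnode_1 = \gcsnode_s$ and $\gcsnode_K = \gcsnode_t$, each consecutive pair connected by an edge of $E_\gcsnode$. First I would argue that, restricted to this particular sequence, $q_{st}^*(t)$ coincides with the sequence-optimal trajectory $q_{1K}^*(t)$ of Theorem 3: any cheaper trajectory through the very same sequence would contradict the global optimality of $q_{st}^*(t)$, since the sequence-constrained feasible set is a subset of the feasible set over which $q_{st}^*(t)$ is minimal.

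Next I would invoke Theorem 3 on this sequence. Since $\lbgtogcs(v_{lb}^s) = \gcsnode_s = \gcsnode_1$ and $\lbgtogcs(v_{lb}^t) = \gcsnode_t = \gcsnode_K$, the given LBG vertices are legitimate representatives, i.e. $v_{lb}^s \in \lbgtogcs^{-1}(\gcsnode_1)$ and $v_{lb}^t \in \lbgtogcs^{-1}(\gcsnode_K)$. Applying Theorem 3 with $v_{lb}^1 = v_{lb}^s$ and $v_{lb}^K = v_{lb}^t$ produces an optimal subgraph path $p_{1K}^*$ in $G_{lb}^{1\ldots K}$ satisfying $c(p_{1K}^*) \le c(q_{1K}^*(t)) = c(q_{st}^*(t))$.

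Finally I would close with subgraph monotonicity of shortest paths: because $G_{lb}^{1\ldots K}$ is by construction a subgraph of the full $G_{lb}$, the path $p_{1K}^*$ is itself a feasible $v_{lb}^s$-to-$v_{lb}^t$ path in $G_{lb}$. Hence the globally optimal path $r_{st}^*$ over $G_{lb}$ can only be cheaper, $c(r_{st}^*) \le c(p_{1K}^*)$. Chaining the two inequalities gives $c(r_{st}^*) \le c(p_{1K}^*) \le c(q_{st}^*(t))$, which is exactly the claim; note that $r_{st}^*$ need not route through the same convex sets as $q_{st}^*(t)$, since the sandwiching only requires one comparison path.

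The main obstacle I anticipate is verifying that the subgraph $G_{lb}^{1\ldots K}$ genuinely contains a connected $v_{lb}^s$-to-$v_{lb}^t$ path realizing Theorem 3 — that the triplet-optimized segments (blue edges) together with the zero-cost stitching edges over overlapping regions (red edges) from Alg. \ref{alg:lbg} suffice to link consecutive convex sets, including the case where $q_{st}^*(t)$ revisits a set so that the sequence has repetitions, and including the fact that $v_{lb}^s, v_{lb}^t$ generally do not coincide with the endpoints of $q_{st}^*(t)$. I would address this by appealing to the single-connected-component property guaranteed by the LBG construction, to the zero-cost intra-overlap edges that let the path reach any representative vertex within a region at no extra cost, and to the observation that the triplet enumeration in Alg. \ref{alg:lbg} covers every length-three window of the sequence, so every consecutive transition is represented in $G_{lb}$.
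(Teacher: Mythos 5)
Your proposal is correct and follows essentially the same route as the paper's proof: invoke the preceding theorem on the sequence of convex sets traversed by $q_{st}^*(t)$ to get $c(p_{1K}^*) \le c(q_{st}^*(t))$, then use the fact that $G_{lb}^{1\ldots K}$ is a subgraph of $G_{lb}$ so that $c(r_{st}^*) \le c(p_{1K}^*)$, and chain the inequalities. Your added explicit steps (that the globally optimal trajectory is sequence-optimal for its own sequence, and the connectivity caveat about the stitching edges) merely spell out details the paper leaves implicit.
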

\begin{proof}
From the previous theorem, we know that the optimal path $p^*_{st}$ in the subgraph $G_{lb}^{s\ldots t}$ holds
$$c(p_{st}^*) \le c(q_{st}^*(t))$$
Since $G_{lb}^{s\ldots t}$ is a subgraph of $G_{lb}$
$$c(r_{st}^*) \le c(p_{st}^*)$$
Thus 
$$c(r_{st}^*) \le c(q_{st}^*(t))$$
\end{proof}

    

\end{document}